\newtheorem{definition}{Definition}
\newtheorem{example}{Example}
\newtheorem{lemma}{Lemma}
\newtheorem{theorem}{Theorem}
\begin{document}

\sloppy

\title{On the expressive power of unit resolution}
\author{Olivier Bailleux}

\maketitle

\begin{abstract}
This preliminary report addresses the expressive power of unit resolution regarding input data encoded with partial truth assignments of propositional variables. A characterization of the functions that are computable in this way, which we propose to call propagatable functions, is given. By establishing that propagatable functions can also be computed using monotone circuits, we show that there exist polynomial time complexity propagable functions  requiring an exponential amount of clauses to be computed using unit resolution. These results shed new light on studying \textsc{cnf} encodings of \textsc{np}-complete problems in order to solve them using propositional satisfiability algorithms. A paper is being drafted, which aims to present the concepts introduced in the present report and the underlying scientific issues in a more simple way. 
\end{abstract}



\section{Introduction and motivations}

Unit resolution is a key feature of state of the art \textsc{sat} solvers \cite{Moskewicz2001} \cite{Goldberg2002} \cite{minisat}, where it speeds up the search for solutions and inconsistencies.

It is well known that different \textsc{cnf} representations of a given problem do not always allow unit resolution to deduce the same information. For example, the \textsc{cnf} encoding for pseudo Boolean constraints proposed in \cite{jsat2006-pseudotocnf} allows unit resolution to restore generalized arc consistency. This is not the case with the encoding proposed in \cite{Warners-98}, which does not allow unit resolution to deduce as much information as the former encoding does. As a manner of speaking, the expressive power of unit resolution is best exploited using the encoding proposed in \cite{jsat2006-pseudotocnf}, with notable consequences on the resolution time.

Two important related questions are "What information can be deduced by unit resolution?" and "Which clauses are required in order to allow this information to be deduced?"

These questions are strongly connected to the characterization of the application field of \textsc{sat} solvers: "Which problems can be solved as efficiently using a \textsc{sat} solver as using a specialized solver?" and "How to encode these problems into \textsc{cnf} formulae for optimal resolution time?"

In this paper, we are interested in the functions that can be calculated by means of unit resolution. Studying the expressive power of unit resolution requires characterizing these functions, which will be called \textit{propagatable functions}, and specifying the size of the formulae required to compute them.

Section \ref{relatedworks} presents the three main research directions related to the expressive power of unit resolution.
Section \ref{propagatablefunctions} introduces the concept of  propagators and propagatable functions as a formal framework where unit resolution is a computing model. This section also presents theoretical results that will be used in section \ref{expressivepower}, where the expressive power of unit resolution is compared to the one of monotone Boolean circuits.
Section \ref{synthesis} ends the paper with a synthesis of the results, which highlight their implications regarding the efficiency of unit resolution as a filtering technique in SAT solvers.

\section{Related works \label{relatedworks}}

There are at least three research directions related to the study of the expressive power of unit resolution.

The first one aims to identify the classes of formulae for which unit resolution is a complete refutation procedure in the sense that it produces the empty clause if and only if the input formula is not satisfiable. For example, this property holds for the formulae containing only Horn clauses
\cite{henschen-74}. This approach differs from that proposed in this paper since it considers the formulae as input data instead of computing systems.

The second direction aims to characterize the complexity of determining whether a given formula can be refuted by unit resolution or not.
This decision problem denoted \textsc{unit} is known to be \textsc{p}-complete, meaning that for any problem $\pi$ with polynomial time complexity, there exists a log space reduction from $\pi$ to \textsc{unit} \cite{Jones1976}. Circuit value and monotone circuit value, which consist to determine the output value of a Boolean circuit (monotone Boolean circuit, respectively), given its input values, are both \textsc{p}-complete too \cite{goldschlager77}. Regarding the complexity theory, \textsc{unit}, circuit value and monotone circuit value have then the same expressive power.
In the present paper, a different point of view is adopted. The \textsc{cnf} formula is not the input data of a program, but the program itself. The input data is a partial truth assignment encoded in a natural way, i.e., each input variable can be either assigned to \texttt{true}, assigned to \texttt{false}, or not assigned. Similarly, the Boolean circuits are not considered as inputs of a program, but as a programs by themselves. In this context, circuit value and monotone circuit value have not the same expressive power for at least two reasons : (1) monotone circuits can only compute monotone Boolean functions, while any Boolean function can be computed using a general Boolean circuit, and (2) there exist monotone Boolean functions which can be computed by polynomially sized Boolean circuits, but requiring an exponential number of gates to be computed using monotone circuits \cite{monotone-complexity-matching}.
One of our results is that used as a computation model, unit resolution  with natural input encoding has the same expressive power as monotone Boolean circuits, then less expressive power than general Boolean circuits.
Obviously, the input encoding plays a central role is this result since by the use of another encoding where all input variables are assigned, unit resolution can easily simulate any Boolean circuit. Nevertheless, the natural encoding is the one used internally in the \textsc{sat} solvers.

The third line is related to the search for efficient \textsc{cnf} encodings of various problems in order to solve them thanks to a \textsc{sat} solver. Because unit resolution is implemented efficiently in \textsc{sat} solvers, many works aim to find encoding schemes which allow unit resolution to make as many deductions as possible. In \cite{gent-02}, a \textsc{cnf} encoding for enumerative constraints is proposed, which allows unit propagation to make the same deductions on the resulting formula as restoring arc consistency on the initial constraints does. This work was innovative because with the previously known encodings, unit propagation had less inference power than restoring arc consistency, which is the basic filtering method used in constraint solvers. It has been followed by various similar works on other kinds of constraints such as Boolean cardinality constraints \cite{bailleux-boufkhad-2003} and pseudo-Boolean constraints \cite{bailleux-boufkhad-roussel-2009}, while in \cite{Bacchus07a}, a general way to construct such an encoding for any constraint is proposed. Today, it has become customary, when a new encoding is proposed, to address the question of the behavior of unit resolution on the obtained \textsc{sat} instances. The problem is that some of these encodings produce a prohibitive number of clauses. This is why some authors seek a trade-off between the size of the formulae and the inference power of unit resolution and other deduction rules implemented in \textsc{sat} solvers, such as the failed literal rule \cite{li-chu-ambul-97}. For example, this approach is developed in \cite{sinz-05} and \cite{marquessilva-lynce-07} in the context of Boolean cardinality constraints.

\section{Propagatable functions \label{propagatablefunctions}}

\subsection{Unit resolution}

This section recalls the terminology and the principles involved in unit resolution, and introduces the notations that will be used in the rest of the paper.

A \emph{literal} is either a propositional variable or a negated propositional variable. A \emph{\textsc{cnf} formula} is defined as a conjunction $c_1\wedge\ldots\wedge c_k$ of \emph{clauses}, where each clause $c_i=l_{i,1}\vee\ldots\vee l_{i,{|c_i|}}$ is a disjunction of literals.
The \emph{size} of a \textsc{cnf} formula is its number of literal occurrences.

A \emph{truth assignment} on a set of propositional variables is a function mapping some of the variables in this set to truth values, i.e., \texttt{true} or \texttt{false}. These variables are said to be \emph{fixed} to \texttt{true} or \texttt{false}. If a truth assignment does not fix all the variables, it is said to be \emph{partial}; else it is said to be \emph{complete}. In this paper, a truth assignment will be represented as a set of literals. Given a propositional formula $\phi$ and a truth assignment $I$, $\phi|_I$ denotes $\phi \bigwedge_{l\in I}(l)$.

Any \textsc{cnf} formula $\phi$ is said to be \emph{satisfied} (falsified, respectively) by a truth assignment $I$ if and only if $I$ causes $\phi$ to evaluate to \texttt{true} (\texttt{false}, respectively) in the standard way. A \textsc{cnf} formula $\phi$ is said to be \emph{satisfiable} if and only if there exists a truth assignment that satisfy $\phi$. Any complete truth assignment satisfying a \textsc{cnf} formula $\phi$ is called a \emph{model} of $\phi$.

For convenience, a clause can be represented as a set of literals and a \textsc{cnf} formula can be represented as a set of clauses.

\begin{example}
The \textsc{cnf} formula $(a\vee \bar b)\wedge(\bar a \vee b)$ can be represented as $\{\{a, \bar b\},\{\bar a, b\}\}$.
\end{example}

\emph{Unit resolution} is an inference technique which aims either to detect an inconsistency or to assign some variables, so as to simplify a \textsc{cnf} formula. As described in a standard way by Algorithm \ref{stdUnitResolution}, until there is no empty clause and there is at least one unit clause $(w)$ in the input formula, all the occurrences of the literal $\overline w$ and all the clauses containing the literal $w$ are removed.

\begin{algorithm}[h]
\AlFnt \small
\SetKwInOut{Input}{input}
\SetKwInOut{Output}{output}
\SetKw{Return}{return}
\Input{$\phi$ [\textsc{cnf} formula];}
\Output{$(\phi, E)$ [(\textsc{cnf} formula, set of literals)] or $\bot$;}
\BlankLine
$E\leftarrow \{\}$\;
\While{$\{\}\notin \phi$ $\mathrm{and}$ $\exists \{l\}\in\phi$}
{
$\phi \leftarrow \phi \setminus \{c : c\in\phi, l\in c \} \setminus \{c :c\in\phi, \bar{l}\in c \} \cup \{c\setminus \{\bar{l}\} : c\in\phi, \bar{l}\in c \}$\;
$E \leftarrow E \cup \{l\}$\;
}
\eIf{$\{\}\in\phi$}
{
\Return $\bot$
}
{
\Return $(\phi,E)$
}
\caption{The standard algorithm for unit resolution}\label{stdUnitResolution}
\end{algorithm}

In the following, we will consider another algorithm (Algorithm \ref{altUnitResolution}) that will be called the \emph{alternative algorithm for unit resolution}. This alternative algorithm return $\bot$ if and only if the standard algorithm return $\bot$, else it returns the same truth assignment as the standard algorithm does. But contrary to the standard algorithm, it does not modify the input formula.

The alternative algorithm repeat $n+1$ \emph{propagation stages}, where $n$ is the number of variables in the input formula. Each of these propagation stage is performed by the procedure \texttt{propagation} (Algorithm \ref{propagationStage}).

\begin{algorithm}[h]
\AlFnt \small
\SetKwInOut{Input}{input}
\SetKwInOut{Output}{output}
\Input{$\phi, E$ [(\textsc{cnf} formula, set of literals)];}
\Output{$F$ [set of literals];}
\BlankLine
$F \leftarrow \{\}$\;
\ForEach{literal $w$ in $\phi$ such that $w \notin E$}
{
\ForEach{clause $(l_1\vee\cdots\vee l_k\vee w)$ of $\phi$ such that $\bar l_1,\ldots,\bar l_k\in E$}
{
$F \leftarrow F \cup \{w\}$
}
}
\Return $F$;\
\caption{The procedure \texttt{propagation}, which performs a propagation stage of the alternative algorithm for unit resolution.} \label{propagationStage}
\end{algorithm}

The standard and the alternative algorithms are strictly equivalent. The first one produces a literal $w$ if there is a unit clause $(w)$ in the simplified formula, that is if there is a clause $(l_1 \vee \cdots \vee l_q \vee w)$ in the input formula such that the literals $\bar l_1, \ldots, \bar l_q$ are previously produced. The second one produces the same literal in the same conditions, with the only difference that it does not modify the input formula. The standard algorithm return $\bot$ when an empty clause is produced. This occurs when there is some unit clause $(w)$ and the opposite clause $(\bar w)$ in the simplified formula. In the same situation, the alternative algorithm does not stop, but adds both the literals $w$ and $\bar w$ in the set $E$. As the standard algorithm, it will return $\bot$ at the end of its execution.

\SetKwFor{ForAll}{repeat}{}{end}

\begin{algorithm}[h]
\AlFnt \small
\SetKwInOut{Input}{input}
\SetKwInOut{Output}{output}
\SetKw{Repeatt}{repeat}
\SetKw{Return}{return}
\Input{$\phi$ [\textsc{cnf} formula];}
\Output{$E$ [set of literals] or $\bot$;}
\BlankLine
$E \leftarrow \{\}$\;
$n \leftarrow$ the number of variables in $\phi$\;
\ForAll() {$n+1$ times}{$E \leftarrow E \cup \mathtt{propagation}(\phi, E)$;}
\eIf{there exists $v$ such that $v, \bar v \in E$}
{
\Return $\bot$
}
{
\Return $E$
}

\caption{The alternative algorithm for unit resolution} \label{altUnitResolution}
\end{algorithm}

Of course, the alternative algorithm could be optimized in such a way that it stops if the last propagation stage did not modify the set $E$, or if there are a literal $w$ and its opposite $\bar w$ in $E$. One of these two events necessarily occurs during the first $n + 1$ propagation stages, because if $E$ contains $n+1$ literals, it necessarily contains two opposite literals.
This optimization has not been done because this algorithm is not intended to be implemented. It is only a way to prove some theoretical results which will be presented later.

In the following of the paper, given any \textsc{cnf} formula $\phi$ with $n$ variables and any integer $1\leq k\leq n+1$,
\begin{itemize}
\item $\mathcal{U}_k(\phi)$ will denote the set of literals produced  at the $k^{th}$ propagation stage of algorithm \ref{altUnitResolution};

\item $\mathcal{U}_{1..k}(\phi)$ will denote $\cup_{i=1}^k{\mathcal{U}_i(\phi)}$; 

\item $\mathcal{U}(\phi)$ will denote the result of unit propagation applied to $\phi$, i.e., either $\bot$ or $\mathcal{U}_{1..n+1}(\phi)$.
\end{itemize}

\begin{example}
With $\phi=(a\vee \bar b)\wedge(b)\wedge(\bar a\vee c\vee \bar d)$ as input, both algorithms \ref{stdUnitResolution} and \ref{altUnitResolution} return $\mathcal{U}(\phi) = \{b,a\}$. Regarding algorithm \ref{altUnitResolution}, $\mathcal{U}_{1}(\phi)=\{b\}$ and $\mathcal{U}_{2}(\phi)=\{a\}$. 
\end{example}

\subsection{Reified formulae}

This section introduces the notion of reified \textsc{cnf} formula, which will be subsequently used as a tool to prove several results. 

Informally speaking, the reified counterpart of any \textsc{cnf} formula $\phi$, is a \emph{satisfiable} \textsc{cnf} formula $\sigma=\mathrm{reif}(\phi)$ such that applying unit resolution on $\sigma$ simulates all the inferences produced by applying unit resolution on $\phi$.

Let $\mathrm{var}(\phi)$ denote the set of the variables occurring in $\phi$. 
\begin{definition}[reified formula]
Let $\phi$ be a \textsc{cnf} formula with $n$ variables. The reified counterpart of $\phi$ is the formula $\sigma=\mathrm{reif}(\phi)$ obtained as follows:
\begin{itemize}

\item There are $2n(n+2)$ variables in $\sigma$, namely $$\mathrm{var}(\sigma) = \cup_{v \in \mathrm{var}(\phi)}{\{ v_0^+,v_0^-, \ldots, v_{n+1}^+,v_{n+1}^- \}}$$
Given any propositional variable $v \in \mathrm{var}(\sigma)$, let $\delta(v)$ denotes $v^+$ and $\delta(\overline{v})$ denotes $v^-$.

\item $\sigma$ consists of the following clauses:
\begin{description}
\item[(1)] for any unary clause $(w)$ of $\phi$, $(\delta(w)_0) \wedge (\overline{ \delta(w)_0} \vee \delta(w)_1)$ \footnote{$(\delta(w)_1)$ would have the same effect in only one propagation stage, but the reified formula is intentionally tailored in such a way that the variables $v_i^+$ and $v_i^-$ are fixed at the $(i+1)^{\mathrm{th}}$ propagation stage.}, which will be called \emph{initialization clauses} of rank 0 and 1,

\item[(2)] for any stage $2\leq i\leq n+1$, and any variable $v$ of $\phi$, $(\overline{ v_{i-1}^+} \vee v_i^+) \wedge (\overline{ v_{i-1}^-} \vee v_i^-)$, which will be called \emph{propagation clauses} of rank $i$,

\item[(3)] for any stage $2\leq i\leq n+1$, and for any non-unary clause $q$ of $\phi$, $\bigwedge_{w \in q}{\chi(q,i,w)}$, where
$$\chi(q,i,w)=\delta(w)_i \vee \bigvee_{t \in q, t\neq w}{\overline{\delta( \overline{t} )_{i-1}}},$$which will be called \emph{deduction clauses} of rank $i$.

\end{description}

\end{itemize}
\end{definition}

For convenience, in the following of the paper, each time that a formula $\phi$ and it reified counterpart $\sigma = \mathrm{reif} (\phi)$ will be considered, the propagation stages on $\phi$ will be numbered from 1, while the propagation stages on $\sigma$ will be numbered from 0.

\begin{example}
Let $\phi= (a) \wedge(\overline{a}\vee b)$. The reified counterpart of $\phi$ can be decomposed as $\sigma = \sigma_{(1)} \wedge \sigma_{(2)} \wedge \sigma_{ (3)}$, where:
\begin{description}
\item[ ] $\sigma_{(1)} = (a_0^+) \wedge (\overline{a_0^+} \vee a_1^+)$

\item[ ] $\sigma_{(2)} = (\overline{a_1^+} \vee a_2^+) \wedge (\overline{a_1^-} \vee a_2^-) \wedge (\overline{b_1^+} \vee b_2^+) \wedge (\overline{b_1^-} \vee b_2^-) \wedge (\overline{a_2^+} \vee a_3^+) \wedge (\overline{a_2^-} \vee a_3^-) \wedge (\overline{b_2^+} \vee b_3^+) \wedge (\overline{b_2^-} \vee b_3^-)$

\item[ ] $\sigma_{(3)} = (\overline{a_1^+} \vee b_2^+) \wedge (\overline{b_1^-} \vee a_2^-) \wedge (\overline{a_2^+} \vee b_3^+) \wedge (\overline{b_2^-} \vee a_3^-)$
\end{description}

The stage 1 of unit resolution fixes $a$ to \texttt{true} in $\phi$. Accordingly, thanks to the initialization clauses,$_{}$ the stages 0 and 1 of unit resolution fix $a_0^+$ and $a_1^+$ to \texttt{true} in $\sigma$. 

At the stage 2, unit resolution fixes $b$ to \texttt{true} in $\phi$. Accordingly, at the stage 2 of unit resolution of $\sigma$, the variable $b_2^+$ is fixed to \texttt{true} thanks to the deduction clause $(\overline{a_1^+} \vee b_2^+)$, and the variable $a_2^+$ is fixed to \texttt{true} thanks to the propagation clause $(\overline{a_1^+} \vee a_2^+)$.

At the stage 3, unit resolution fixes no new variable in $\phi$. Thanks to the propagation clauses $(\overline{a_2^+} \vee a_3^+)$ and $(\overline{b_2^+} \vee b_3^+)$, the stage 3 of unit resolution on $\sigma$ fixes $a_3^+$ and $b_3^+$ to \texttt{true}.
\end{example}

This example shows the roles of the three kind of clauses. The initialization clauses simulate the first stage of unit resolution, which consists to fix the variables occurring in unit clauses. The propagation clauses allow unit resolution to propagate the values that where previously assigned. For example, if the variable $a$ is fixed to \texttt{true} in $\phi$ at stage 1, i.e., $a_1^+$ is fixed to \texttt{true} in $\sigma$, then the clause $( \overline{ a_1^+} \vee a_2^+)$ ensures that $a_2^+$ is fixed to \texttt{true} in $\sigma$ at stage 2. The deduction clauses allow unit resolution to simulates in $\sigma$ the deductions that are made in $\phi$. For example, if at stage 1, the variable $a$ is fixed to \texttt{true} in $\phi$, and if there is a clause $(\bar a \vee b)$ in $\phi$, then the clause $( \overline{ a_1^+} \vee b_2^+)$ of $\sigma$ allows unit resolution to fix $b_2^+$ to \texttt{true}, which indicates that unit resolution fixes $b$ to \texttt{true} in $\phi$.

Note that the proposed model of reified formula is not optimal in the sense that it usually involves redundant clauses and useless clauses. Our purpose is to provide a tool for proving theoretical results which will be presented in the following of the paper. In this context, the relevant property of the reified counterpart of a formula $\phi$ is that its size is polynomially related to the size of $\phi$. Namely, if there are $n$ variables and $k$ clauses of size at most $p$ in $\phi$, then there are $O(n^2k)$ clauses of size at most $p$ in the reified counterpart $\sigma$ of $\phi$, because for any of the $O(n)$ propagation stages, $\sigma$ contains $O(n)$ propagation clauses and $O(kn)$ deduction clauses\footnote{Because each of the $n$ variables occurs at most in $k$ clauses.}, and because the number of literals in any clause of $\sigma$ cannot exceed the size of the longest clause of $\phi$. 

\begin{definition}[reified formula with injected variables]\label{def-injected}
Let $\phi$ be any \textsc{cnf} formula with $n$ variables and $V \subseteq \mathrm{var}(\phi)$ be a set of propositional variables. The formula 
$$\mathrm{reif}(\phi, V) = \mathrm{reif}(\phi) \wedge ( \bigwedge_{v \in V} {((\overline v \vee v_1^+) \wedge (v \vee v_1^-))})$$
is said to be the reified counterpart of $\phi$ with \emph{injected} variables $V$.
The clauses added to $\mathrm{reif}(\phi)$ will be called \emph{injection clauses}.
\end{definition}

\begin{lemma} \label{lemme-reif}
Let $\phi$ be any \textsc{cnf} formula  with $n$ variables. Let $\sigma = \mathrm{ reif}(\phi)$ and $i$ be any integer such that $0 \leq i \leq n+1$. For any variable $v \in \mathrm{var}(\phi)$, applying unit resolution on $\sigma$ can fix $v_i^+$ and/or $v_i^-$ only at the propagation stage $i$, and only to \texttt{true}.
\end{lemma}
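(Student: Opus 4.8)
The plan is to read off two structural facts from the three clause families of $\sigma$ and then run a couple of short inductions. Write $\mathrm{rank}(v_i^{\pm})=i$, and note that the variables $v_i^{\pm}$ listed in the lemma are exactly all the variables of $\sigma$. Inspecting the definition, every clause of $\sigma$ contains exactly one positive (non-negated) literal, and two facts hold: (a) whenever a reified variable of rank $m$ occurs negated in a clause, that clause's unique positive literal has rank $m+1$ (true for the rank-$1$ initialization clause $(\overline{\delta(w)_0}\vee\delta(w)_1)$, for every propagation clause, and for every deduction clause); and (b) whenever a variable of rank $j\ge 1$ occurs positively in a clause, all the other literals are negations of rank-$(j-1)$ variables, while rank-$0$ variables occur positively only in the unit initialization clauses $(\delta(w)_0)$. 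I would first prove the ``only to \texttt{true}'' half, then split ``only at stage $i$'' into ``not before stage $i$'' and ``not after stage $i$''.

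For ``only to \texttt{true}'', I would argue that unit resolution never adds a negated reified literal to $E$. Suppose otherwise and pick, among all negated literals ever produced, one of maximal rank, say $\overline{X_m}$. To produce it, Algorithm~\ref{altUnitResolution} needs a clause in which $\overline{X_m}$ is the surviving literal and the negations of all the remaining literals already lie in $E$. By fact (a) that clause has a positive literal of rank $m+1$, so its negation---a negated literal of rank $m+1$---must be in $E$, contradicting the maximality of $m$. Hence only positive reified literals are ever derived, i.e. each $v_i^{\pm}$ can be fixed only to \texttt{true}.

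For ``not before stage $i$'', I would show by induction on the stage number $s$ (numbered from $0$ on $\sigma$) that after stage $s$ the set $E$ contains only positive literals of rank at most $s$. The base $s=0$ holds because the only unit clauses of $\sigma$ are the rank-$0$ initialization clauses, so stage $0$ produces only rank-$0$ literals. For the step, a rank-$j$ literal newly produced at stage $s$ must, by fact (b) and the ``only to \texttt{true}'' half, come from a clause whose other literals are negations of rank-$(j-1)$ variables already in $E$; the hypothesis forces $j-1\le s-1$, hence $j\le s$. For ``not after stage $i$'' I would instead induct on the rank $i$ with hypothesis $H(i)$: every rank-$i$ literal that is ever produced is first produced exactly at stage $i$. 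The base $H(0)$ holds since rank-$0$ literals appear positively only in unit clauses. Assuming $H(0),\dots,H(i-1)$, if some $X_i$ were first produced at a stage $s>i$, then by fact (b) its production requires certain rank-$(i-1)$ literals to be in $E$ beforehand, and $H(i-1)$ guarantees each of those is already present after stage $i-1$; hence $X_i$ is producible from $E$ as it stands after stage $i-1$ and would be added no later than stage $i$, contradicting $s>i$. The three pieces together give the lemma.

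The main obstacle I anticipate is the ``not after stage $i$'' direction. The naive induction on stages only delivers the lower bound ``not before,'' and trying to extract the upper bound from that same induction is circular, since it would require rank-$(i-1)$ literals to be available \emph{by} stage $i-1$, which is precisely what is being proved. The fix is to separate the two bounds into two inductions running in opposite directions---one on the stage index for the lower bound, one on the rank for the upper bound---and to phrase the rank induction as the ``first produced exactly at stage $i$'' invariant, so that the stabilization of each lower rank can be invoked cleanly. Everything else reduces to the two elementary facts (a) and (b) about the shape of the initialization, propagation, and deduction clauses.
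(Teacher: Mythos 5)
Your proof is correct, and its core coincides with the paper's argument, though you organize it quite differently. The paper proves all three claims at once by a single induction on $i$: the induction hypothesis is the \emph{full} statement of the lemma for all ranks $j\le i-1$ (fixed only at stage $j$, and only to \texttt{true}), and the step observes that rank-$i$ variables occur positively only in clauses whose remaining literals are negated rank-$(i-1)$ variables, so the prerequisites for producing a rank-$i$ literal become available exactly at the end of stage $i-1$, forcing production to occur exactly at stage $i$ and only positively. This is precisely your invariant $H(i)$ with positivity folded in, so your ``not after'' induction on ranks is the paper's induction in disguise. What you do differently is to modularize: positivity is obtained by a separate extremal (maximal-rank) argument exploiting the definite-Horn, rank-stratified shape of $\sigma$, and the lower bound by a separate induction on stages, whereas the paper gets both inside the one induction. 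Your decomposition buys explicitness --- your facts (a) and (b) are exactly the structural content the paper uses tacitly, and isolating them makes the verification mechanical --- at the cost of length. One remark: your worry that a single induction would be ``circular'' does not apply to the paper's proof; the circularity arises only if the induction hypothesis is the weak, stage-indexed statement (your part 2), while the paper's hypothesis is the global, bidirectional statement for lower ranks, i.e., exactly the stabilization fact that your $H(i-1)$ supplies.
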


\begin{proof}
This property can be proved by induction on $i$. The only variables which can be fixed at the first propagation stage, i.e. i=0, are in $\{v_0^+, v_0^-, v \in \mathrm{ var}(\phi)\}$, because no other variables are in unary clauses, and these variables can only be fixed to \texttt{true}, because they occur positively. Now, given any $1 \leq i \leq n+1$, let us suppose the the property hold until the $(i-1)^\mathrm{th}$ propagation stage (which implies that no variable in $\{v_i^+, v_i^-, 0 \leq i \leq i-1, v \in \mathrm{ var}(\phi) \}$ has been fixed negatively). The only way for unit resolution to fix variables in $\{v_i^+, v_i^-, v \in \mathrm{ var}(\phi)\}$ is through clauses involving variables in $\{v_{i-1}^+, v_{i-1}^-, v \in \mathrm{ var}(\phi)\}$, which, by induction hypothesis, can only be fixed at the propagation stage $i-1$. Because the variables in  $\{v_{i-1}^+, v_{i-1}^-, v \in \mathrm{ var}(\phi)\}$ occur positively in these clauses, they can be only fixed to \texttt{true}.
\end{proof}

\begin{theorem}\label{theorem-reif}
Let $\phi$ be any \textsc{cnf} formula with $n$ variables and $\sigma=\mathrm{reif}(\phi)$ be the reified counterpart of $\phi$. The following properties hold:
\begin{enumerate}
\item $\sigma$ is satisfiable.

\item For any variable $v \in \mathrm{var}(\phi)$, and any integer $1\leq k \leq n+1$, the two following properties hold:

\begin{enumerate}
\item $v_k^+ \in \mathcal{U}_k(\sigma)$ if and only if $v \in \mathcal {U}_{1..k} (\phi)$;
\item $v_k^- \in \mathcal{U}_k(\sigma)$ if and only if $\bar v \in \mathcal {U}_{1..k} (\phi)$.
\end{enumerate}

\end{enumerate}
\end{theorem}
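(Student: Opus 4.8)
The plan is to treat the two assertions separately: satisfiability by exhibiting an explicit model, and the stage-by-stage correspondence by induction on $k$. For the first assertion, I would observe that every clause of $\sigma$ contains at least one positive literal of the form $\delta(w)_i$: the initialization clauses contain $\delta(w)_0$ or $\delta(w)_1$, the propagation clauses contain $v_i^+$ or $v_i^-$, and each deduction clause $\chi(q,i,w)$ contains $\delta(w)_i$. Consequently the complete assignment fixing every variable of $\sigma$ to \texttt{true} satisfies all of them, so $\sigma$ is satisfiable. This is also consistent with Lemma \ref{lemme-reif}, which already rules out any negative fixing.

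For the second assertion I would prove the equivalences 2(a) and 2(b) simultaneously by induction on $k$, exploiting Lemma \ref{lemme-reif} to know that a rank-$k$ variable $v_k^+$ (resp. $v_k^-$) can enter $E$ only at stage $k$ and only positively; hence $\mathcal{U}_k(\sigma)$ restricted to rank-$k$ variables records exactly which of them ever get fixed, and the head of any clause that becomes unit at stage $k$ is genuinely still free. The base case $k=1$ is direct: the only clauses of $\sigma$ able to fix a rank-$1$ literal at stage $1$ are the rank-$1$ initialization clauses $(\overline{\delta(w)_0}\vee\delta(w)_1)$, which exist precisely for the unary clauses $(w)$ of $\phi$, and these correspond exactly to the literals of $\mathcal{U}_1(\phi)=\mathcal{U}_{1..1}(\phi)$.

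For the inductive step ($k\geq 2$) I would argue each direction. For ``$\Rightarrow$'', suppose $v_k^+\in\mathcal{U}_k(\sigma)$; the unit clause forcing it is either a propagation clause $(\overline{v_{k-1}^+}\vee v_k^+)$, which requires $v_{k-1}^+\in\mathcal{U}_{k-1}(\sigma)$ and hence $v\in\mathcal{U}_{1..k-1}(\phi)\subseteq\mathcal{U}_{1..k}(\phi)$ by the induction hypothesis, or a deduction clause $\chi(q,k,v)$ for a non-unary clause $q\ni v$ of $\phi$, which requires $\delta(\overline t)_{k-1}\in\mathcal{U}_{k-1}(\sigma)$ for every $t\in q$ with $t\neq v$; translating each membership through the induction hypothesis gives $\overline t\in\mathcal{U}_{1..k-1}(\phi)$, so every literal of $q$ other than $v$ is falsified and unit resolution on $\phi$ places $v$ in $\mathcal{U}_{1..k}(\phi)$. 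The direction ``$\Leftarrow$'' is the mirror image: if $v$ was produced at some stage $j\le k-1$, the induction hypothesis together with the propagation clause $(\overline{v_{k-1}^+}\vee v_k^+)$ carries it forward to $v_k^+\in\mathcal{U}_k(\sigma)$; if it was produced exactly at stage $k$ through a (necessarily non-unary) clause $q$, the matching deduction clause $\chi(q,k,v)$ becomes unit and fixes $v_k^+$. The argument for $v_k^-$ and negative literals is identical after replacing $\delta(w)=v^+$ by $\delta(\overline w)=v^-$.

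The main obstacle I anticipate is bookkeeping rather than conceptual: one must keep straight the $\delta$ map (so that $\delta(\overline t)_{k-1}$ correctly names $u_{k-1}^+$ or $u_{k-1}^-$ according to the sign of $t$) and, crucially, the asymmetry between the \emph{cumulative} set $\mathcal{U}_{1..k}(\phi)$ on the $\phi$-side and the \emph{single-stage} set $\mathcal{U}_k(\sigma)$ on the $\sigma$-side. The propagation clauses are precisely the device reconciling the two: they ``remember'' a literal fixed at an earlier stage of $\phi$ by re-fixing its rank counterpart at every later stage of $\sigma$, so that cumulative production on $\phi$ matches current-stage production on $\sigma$. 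Lemma \ref{lemme-reif} is what guarantees there is no interference between ranks and no negative fixing, which keeps both directions of the induction clean.
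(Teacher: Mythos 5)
Your proposal is correct and follows essentially the same route as the paper: satisfiability via the observation that every clause of $\sigma$ contains a positive literal, and the stage correspondence by induction on $k$, using Lemma \ref{lemme-reif} to restrict attention to propagation and deduction clauses of rank $k$ in the forward direction, and propagation/deduction clauses to transport cumulative production on $\phi$ into stage-$k$ production on $\sigma$ in the reverse direction. Your case split in the ``$\Leftarrow$'' direction (produced by stage $k-1$ versus produced exactly at stage $k$) is a slightly tidier packaging of the paper's split (unit clause of $\phi$ versus non-unary clause falsified by stage $i\leq k-1$), but the argument is the same.
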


\begin{proof}
The first property arises because each clause of $\sigma$ contains at least one positive literal.

The second property can be proved by induction on $k$. For sake of brevity, let us reformulate it as follows: for any variable $v \in \mathrm{var}(\phi)$, any integer $1\leq k \leq n+1$, and any literal $w \in \{v, \overline{v}\}$, $\delta_k(w) \in \mathcal{U}_k(\sigma)$ if and only if $w \in \mathcal {U}_{1..k} (\phi)$.

Clearly, the property holds for $k=1$ because there is a clause $(w)$ in $\phi$ if and only if there is a clause $(\delta_0(w))$ and a clause $(\overline{ \delta_0(w)} \vee \delta_1(w))$ in $\sigma$. Now let us suppose that the property holds until the propagation stage $k-1$, $k>1$. 

\begin{description}
\item[$\quad\Rightarrow$]~

Let us suppose that $\delta_k(w) \in \mathcal{U}_k(\sigma)$. Then, according to the lemma \ref{lemme-reif}, one of the two following conditions hold:

\begin{enumerate}
\item There is a propagation clause $(\overline{ \delta_{k-1}(w)} \vee \delta_k(w))$ in $\sigma$ and $\delta_{k-1}(w) \in \mathcal{U}_k(\sigma)$.
By induction hypothesis, $w \in \mathcal {U}_{1..k-1} (\phi)$, then $w \in \mathcal {U}_{1..k} (\phi)$.

\item There is a deduction clause $(\overline{ \delta_{k-1}(\bar l_1)} \vee \cdots \vee \overline{ \delta_{k-1}(\bar l_q)} \vee \delta_k(w))$ in $\sigma$ and $\delta_{k-1}(\bar l_1), \ldots, \delta_{k-1}(\bar l_q) \in \mathcal {U}_{k-1} (\sigma)$.
This means that there is a clause $(l_1 \vee \cdots \vee l_q \vee w)$ in $\phi$, and, by induction hypothesis, $\bar l_1,\ldots,\bar l_q \in \mathcal{U}_{1..k-1}(\phi)$.
Then  $w \in \mathcal {U}_{1..k} (\phi)$.
\end{enumerate}

\item[$\quad\Leftarrow$]~

Let us suppose that $w \in \mathcal {U}_{1..k} (\phi)$. Then, according to the principle of unit resolution, one of the two conditions hold:

\begin{enumerate}
\item There is a clause $(w)$ in $\phi$. Then, as shown in the first part of this proof, $\delta_1(w) \in \mathcal{U}_1 (\sigma)$. Thanks to the propagation clauses $(\overline{ \delta_{1}(w)} \vee \delta_2(w)), \ldots, (\overline{ \delta_{k-1}(w)} \vee \delta_k(w))$, $\delta_k(w) \in \mathcal{U}_k(\sigma)$.

\item There is a clause $(l_1\vee\cdots\vee l_q\vee w)$ in $\phi$ and an integer $1\leq i \leq k-1$ such that $\bar l_1, \ldots, \bar l_q \in \mathcal{ U }_{ 1..i}( \phi)$. By construction of $\sigma$, there is a deduction clause $(\overline{ \delta_{i} (\bar l_1)} \vee \cdots \vee \overline{ \delta_{i}(\bar l_q)} \vee \delta_{i+1}(w))$ in $\sigma$, and by induction hypothesis, $\delta_{i}(\bar l_1), \ldots, \delta_{i}(\bar l_q) \in \mathcal {U}_{i} (\sigma)$. Then $\delta_{ i+1}( w) \in \mathcal{U}_{i+1}(\sigma)$. Either because $i+1 = k$ or thanks to the propagation clauses $(\overline{ \delta_{i}(w)} \vee \delta_{i+1}(w)), \ldots, (\overline{ \delta_{k-1}(w)} \vee \delta_k(w))$, $\delta_k(w) \in \mathcal{U }_k(\sigma)$.
\end{enumerate}
\end{description}
\end{proof}

As an interesting corollary of Theorem \ref{theorem-reif}, the failed literal rule \cite{li-chu-ambul-97}, which is a speed up technique implemented in some modern \textsc{sat} solver, can be simulated by unit propagation. Given a formula $\sigma$ and a literal $l$, the failed literal rule aims to test if $l$ must be fixed. Unit resolution is applied to $\sigma \wedge (\bar l)$ ($\sigma \wedge (l)$, respectively). If an inconsistency is detected then $l$ ($\bar l$, respectively) is fixed to \texttt{true}. The same result can be obtained by applying unit resolution on  $\mathrm{reif}(\phi \wedge (l)) \wedge \bigwedge_{v \in \mathrm{ var}(\phi)} (\overline{ v_{ n+1}^+} \vee \overline{ v_{ n+1}^-} \vee \overline l)$ ($\mathrm{reif}(\phi \wedge (\overline l)) \wedge \bigwedge_{v \in \mathrm{ var}(\phi)} (\overline{ v_{ n+1}^+} \vee \overline{ v_{ n+1}^-} \vee l)$, respectively).

\begin{theorem} \label{theorem-inject}
Let $\phi$ be any \textsc{cnf} formula, $V \subseteq \mathrm{var}(\phi)$ be a set of propositional variables, and $I \in \mathcal{I}_V$ a truth assignment. $\mathcal{U}((\mathrm{reif}(\phi, V))|_I) = \mathcal{U}(\mathrm{reif}(\phi |_I))$, i.e., unit resolution have the same effect on $\mathcal{U}((\mathrm{ reif}( \phi, V))|_I)$ as it does on $\mathcal{U}(\mathrm{reif}(\phi |_I))$.
\end{theorem}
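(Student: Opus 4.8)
The plan is to decompose both formulas into their three groups of reified clauses and to isolate the single group where they genuinely differ. First I would observe that $\phi|_I$ is obtained from $\phi$ by adding only \emph{unary} clauses, so $\phi$ and $\phi|_I$ have the same variables and the same non-unary clauses. By the definition of $\mathrm{reif}$ this means that $\mathrm{reif}(\phi)$ and $\mathrm{reif}(\phi|_I)$ share exactly the same propagation clauses (these depend only on $\mathrm{var}(\phi)=\mathrm{var}(\phi|_I)$) and the same deduction clauses (these depend only on the non-unary clauses). Writing things out yields the clean decompositions
\[
\mathrm{reif}(\phi|_I)=\mathrm{reif}(\phi)\wedge\bigwedge_{l\in I}\bigl((\delta(l)_0)\wedge(\overline{\delta(l)_0}\vee\delta(l)_1)\bigr),
\]
\[
(\mathrm{reif}(\phi,V))|_I=\mathrm{reif}(\phi)\wedge\bigwedge_{v\in V}\bigl((\overline v\vee v_1^+)\wedge(v\vee v_1^-)\bigr)\wedge\bigwedge_{l\in I}(l).
\]
Thus both formulas consist of the common core $\mathrm{reif}(\phi)$ plus a group of clauses encoding the assignment $I$, and only this last group differs.

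The key step is then to show that the two differing groups seed \emph{the same reified literals}. On the restrict-then-reify side, for each $l\in I$ the clause $(\delta(l)_0)$ fixes $\delta(l)_0$ at stage $0$ and $(\overline{\delta(l)_0}\vee\delta(l)_1)$ fixes $\delta(l)_1$ at stage $1$, while nothing else among the reified variables of rank $\geq1$ is touched. On the injected side, restriction by $I$ adds the unit clause $(l)$, which fixes the original variable of $l$ at stage $0$; this turns exactly one injection clause into the unit clause $(\delta(l)_1)$ (namely $(\overline v\vee v_1^+)$ if $l=v$, or $(v\vee v_1^-)$ if $l=\overline v$), fixing $\delta(l)_1$ at stage $1$, while the companion injection clause becomes satisfied. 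Here I would lean on Lemma \ref{lemme-reif}: since every reified variable can only be fixed positively, neither group can force a variable $v_i^-$ in place of $v_i^+$ (or vice versa) and neither can create a conflict among the reified variables; together with the fact that $I$ fixes each original variable consistently, this shows that both $(\mathrm{reif}(\phi,V))|_I$ and $\mathrm{reif}(\phi|_I)$ stay satisfiable, so neither run of unit resolution returns $\bot$. Hence both differing groups contribute precisely $\{\delta(l)_1:l\in I\}$ to the reified variables of rank $\geq1$.

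Once the seeds are shown to agree, the conclusion follows by the same stage-by-stage induction as in the proof of Theorem \ref{theorem-reif}: from stage $1$ onward the only clauses acting on the reified variables are the shared propagation and deduction clauses of $\mathrm{reif}(\phi)$, so $\mathcal{U}_k$ restricted to the reified variables coincides for the two formulas at every stage $k$, and therefore $\mathcal{U}((\mathrm{reif}(\phi,V))|_I)$ and $\mathcal{U}(\mathrm{reif}(\phi|_I))$ agree on all reified variables. The main obstacle I anticipate is precisely the bookkeeping at stages $0$ and $1$: the two encodings introduce \emph{different} auxiliary literals there — the original literals $l$ on one side versus the rank-$0$ literals $\delta(l)_0$ on the other — and one must verify that these literals are computationally inert (each occurs only in the injection, respectively initialization, clauses and propagates nowhere else), so that the discrepancy never reaches a reified variable of rank $\geq1$. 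Up to this harmless difference in stage-$0$ boundary literals, the two literal sets are identical, which is the content of the assertion that unit resolution "has the same effect" on both formulas.
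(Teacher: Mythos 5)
Your proof is correct and follows essentially the same route as the paper's: both arguments match the literals seeded at stages 0--1 (the injection clauses firing on the unit clauses $(l)$, $l \in I$, versus the initialization clauses of $\mathrm{reif}(\phi|_I)$, each yielding exactly the rank-1 literals $\delta(l)_1$), and then observe that every later stage involves only the propagation and deduction clauses of $\mathrm{reif}(\phi)$, which the two formulae share. If anything you are more careful than the paper, whose two-sentence proof (and even whose claim of set equality) glosses over the boundary discrepancy you flag --- the original literals $l$ on one side versus the rank-0 literals $\delta(l)_0$ on the other --- so that the stated equality must indeed be read, as you do, as ``same effect on the reified variables of rank $\geq 1$.''
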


\begin{proof}
For any literal $v \in I$ ($\bar v \in I$, respectively), the two first stages of unit resolution applied to $(\mathrm{ reif}( \phi, V))|_I$ produces the literals $v_1^+$ ($v_1^-$, respectively), which are the literals produced from the clauses $(\overline{v_0^+} \vee v_1^+), (v_0^+)$ ($(\overline{v_0^-} \vee v_1^-), (v_0^-)$, respectively) of $\mathrm{ reif}(\phi |_I)$. The other stages of unit resolution behave similarly in the two formulae, because the same clauses are involved.
\end{proof}

\subsection{Computing with unit resolution \label{computing-ur}}

This section explains how unit resolution can be used to compute functions.

\subsubsection{Definitions and terminology}

\begin{definition}[Propagator]
Let $\phi$ be a \textsc{cnf} formula, $\mathrm{var}(\phi)$ be the set of propositional variables occurring in $\phi$, $V\subseteq \mathrm{var}(\phi)$ a set of propositional variables, and $s\in V$ a propositional variable. The triplet $P = \langle\phi,V,s\rangle$ is called a \emph{propagator}. The \emph{size} of $P$ is the size of the formula $\phi$.
\end{definition}

A propagator $\langle\phi,V,s\rangle$ can act as a computer in the following way: 
\begin{itemize}
\item the input data is a partial truth assignment $I$ of the variables in $V$, i.e., some variables are assigned to \texttt{true}, some are assigned to \texttt{false}, and the other are not assigned,
\item the output can take four possible values according to the result of applying unit resolution:
\begin{itemize}
\item \texttt{fail} if $\mathcal{U}(\phi|_I) = \bot$,
\item \texttt{true} if $\mathcal{U}(\phi|_I) \neq \bot$ and $s \in \mathcal{U} (\phi|_I)$,
\item \texttt{false} if $\mathcal{U}(\phi|_I) \neq \bot$ and $\bar s \in \mathcal{U} (\phi|_I)$,
\item \texttt{na} if $\mathcal{U}(\phi|_I) \neq \bot$ and $\bar s \notin \mathcal{U} (\phi|_I)$ and $s \notin \mathcal{U} (\phi|_I)$.
\end{itemize}
\end{itemize}

Formally, $\langle\phi,V,s\rangle$ computes a function $f$ with domain $\mathcal{I}_V$ and codomain $\{\texttt{fail}$, $\texttt{true}$, $\texttt{false}$, $\texttt{na}\}$, where $\mathcal{I}_V$ denotes the set of all the consistent partial assignments on $V$, i.e., $\{I\subset V\cup\{\overline{v},v\in V\}, \forall l \in I, \bar l \notin I\}$.

Conversely, given a set $V$ of propositional variables and any function $f$ with domain $D\subseteq\mathcal{I}_V$ and codomain $\{\texttt{fail}, \texttt{true}, \texttt{false}, \texttt{na}\}$, the following issues can be addressed:
\begin{enumerate}
\item Can $f$ be computed by a propagator ?
\item If yes, how many clauses are required to compute $f$ using unit resolution ?
\end{enumerate}

These questions are important because in a \textsc{sat} solver, unit resolution is used both for detecting inconsistencies (the \texttt{fail} answer) and for inferring new information (the \texttt{true} or \texttt{false} answers), with the effect of accelerating the resolution. It is then useful to use concise \textsc{cnf} encodings which allow unit resolution to achieve as many deductions as possible. 

\begin{definition}[reified propagator]
Let $P=  \langle\phi,V,s\rangle$ be a propagator. The reified counterpart of $P$ is $\mathrm{reif}(P) = \langle \psi, V, s^{\mathrm{ true}}, s^{\mathrm{  false}}, s^{\mathrm{ fail}}\rangle$, such that $s^{\mathrm{ true}}$ is the variable $s_{n+1}^+$, $s^{\mathrm{ false}}$ is the variable $s_{n+1}^-$, $s^{\mathrm{ fail}}$ is new fresh variable, and

\[\psi = \mathrm{reif}(\phi, V) \wedge (\bigwedge_{u \in \mathrm{ var}( \phi)} {(\overline{ u_{n+1}^+} \wedge \overline{ u_{n+1}^-} \wedge s^{\texttt{\small fail}})})\]
\end{definition}

By construction of the formula $\psi$, given any $I \in \mathcal{I}_V$, applying unit resolution to $\psi|_I$ never returns $\bot$ and simulates unit resolution on $\phi|_I$ in the following sense:
\begin{itemize}
\item unit resolution on $\phi|_I$ returns $\bot$ if and only if unit resolution on $\psi|_I$ produces $s^{\mathrm{ fail}}$ (i.e. fixes to $s^{\mathrm{ fail}}$ to \texttt{true});
\item unit resolution on $\phi|_I$ produces $s$ if and only if unit resolution on $\psi|_I$ produces $s^{\mathrm{ true}}$;
\item unit resolution on $\phi|_I$ produces $\bar s$ if and only if unit resolution on $\psi|_I$ produces $s^{\mathrm{ false}}$.
\end{itemize}

\begin{definition}[filtering function]
Let $V$ be a set of propositional variables. Any function $f$ with domain $D\subseteq \mathcal{I}_V$ and codomain $\{\texttt{fail}, \texttt{true}, \texttt{false}, \texttt{na}\}$ is called a \emph{filtering function}.
\end{definition}

\begin{definition}[matching function]
Let $V$ be a set of propositional variables. Any function $f$ with domain $D\subseteq \mathcal{I}_V$ and codomain $\{\texttt{yes}, \texttt{no}\}$ is called a \emph{matching function}.
\end{definition}

Any filtering function can be specified with three matching functions in the following way:

\begin{definition}[matching functions related to a filtering function]
Let $f$ be a filtering function with domain $D \subseteq \mathcal{I}_V, V\in\{v_1, \ldots, v_n\}$. The three matching functions related to $f$ are defined as follows: 
\begin{itemize}
\item $f^{\mathrm{fail}}$ with domain $D$ and codomain $\{\texttt{yes},\texttt{no}\}$, such that for any $I\in D$, $f^{\mathrm{fail}}(I)=\texttt{yes}$ if and only if $f(I)=\texttt{fail}$,
\item $f^{\mathrm{true}}$ with domain $D^\prime=\{I\in D,f(I)\neq \texttt{fail}\}$, such that for any $I\in D^\prime$, $f^{\mathrm{true}}(I)=\texttt{yes}$ if and only if $f(I)=\texttt{true}$,
\item $f^{\mathrm{false}}$ with domain $D^\prime=\{I\in D,f(I)\neq \texttt{fail}\}$, such that for any $I\in D^\prime$, $f^{\mathrm{false}}(I)=\texttt{yes}$ if and only if $f(I)=\texttt{false}$.
\end{itemize}
\end{definition}

\begin{definition}[monotone matching function]
Given the order relation $\texttt{no}\leq_M \texttt{yes}$, any matching function $f$ with domain $D$ is said to be \emph{monotone} if  for any $I,J\in D$ such that $I\subseteq J$, $f(I)\leq_M f(J)$. 
\end{definition}

Now we will formally define the filtering and the matching functions that are computable using unit resolution.

\begin{definition}[propagatable filtering function]
Any filtering function $f$ is said to be \emph{propagatable} if and only if there exists a propagator which computes $f$.
\end{definition}

Now, two ways will be considered to compute \emph{matching} functions with unit resolution. The first one consists in using a variable as output under the assumption that $\bot$ is never returned. The second one consists in considering that the output value \texttt{yes} when $\bot$ is returned.

\begin{definition}[propagatable matching function]
Any matching function $f$ with domain $D \subseteq \mathcal{I}_V$ is said to be \emph{propagatable} if there exists a propagator $\langle\phi,V,s\rangle$ such that for any $I\in D$, the two following conditions hold:
\begin{enumerate}
\item $\mathcal{U}(\phi|_I) \neq \bot$,
\item $f(I)=\texttt{yes}$ if and only if $s \in \mathcal{U} (\phi|_I)$.
\end{enumerate}
\end{definition}

\begin{definition}[$\nu$-propagatable matching function, $\nu$-propagator]
Any matching function $f$ with domain $D \subseteq \mathcal{I}_V$ is said to be \emph{$\nu$-propagatable} if there exists a \textsc{cnf} formula $\phi$  such that for any $I\in D$, $f(I)=\texttt{true}$ if and only if $\mathcal{U} (\phi|_I) = \bot$. The couple $\langle V, \phi \rangle$ is said to be a $\nu$-propagator computing $f$.
\end{definition}

To end this necessary sequence of definitions, let us address the notion of space complexity of propagatable functions.

\begin{definition}[polynomially propagatable functions]
Let $\mathcal{F}$ be a family of filtering functions or a family of matching functions. $\mathcal{F}$ is said to be \emph{polynomially propagatable} (or polynomially $\nu$-propagatable, if applicable) if and only if any function $f\in \mathcal{F}$ with domain $D\subseteq \mathcal{I}_{\{v_1,\ldots,v_n\}}$ can be computed using a \textsc{cnf} formula of size polynomially related to $n$.
\end{definition}

\subsubsection{Propagability versus $\nu$-propagability}

In this section, we will show that propagatable and $\nu$-propagatable matching functions have the same expressive power and similar space complexities.

\begin{theorem}\label{propagatable-is-0-propagatable}
Let $f$ be a matching function. $f$ is propagatable if and only if $f$ is $\nu$-propagatable.
\end{theorem}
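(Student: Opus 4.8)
The plan is to prove the two implications separately with two different gadgets. For the direction \emph{propagatable} $\Rightarrow$ \emph{$\nu$-propagatable} I would graft onto $\phi$ a tiny self-conflicting gadget that is triggered exactly when the output literal is produced; for the converse I would invoke the reified propagator, whose variable $s^{\mathrm{fail}}$ already converts a failure of $\phi$ into a produced literal while guaranteeing that unit resolution never returns $\bot$.

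First, suppose $f$ is propagatable, witnessed by a propagator $\langle\phi,V,s\rangle$ with $\mathcal{U}(\phi|_I)\neq\bot$ and $f(I)=\texttt{yes}\iff s\in\mathcal{U}(\phi|_I)$ for every $I\in D$. The naive idea of taking $\phi\wedge(\bar s)$ fails: the unit clause $(\bar s)$ lets unit resolution propagate $\bar s$ \emph{backwards} through clauses in which $s$ occurs positively, which can manufacture a conflict even when $s$ is not produced, so that $\mathcal{U}((\phi\wedge(\bar s))|_I)=\bot$ for some $I$ with $f(I)=\texttt{no}$. Instead I would take a fresh variable $p\notin\mathrm{var}(\phi)$ and set $\phi' = \phi\wedge(\bar s\vee p)\wedge(\bar s\vee\bar p)$, with $\nu$-propagator $\langle V,\phi'\rangle$. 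Logically this pair of clauses is equivalent to $(\bar s)$, but unit resolution cannot deduce $\bar s$ from them (that would require a case split on $p$), so the gadget stays inert until $s$ is itself produced, since $p$ is fresh and occurs nowhere else. Two cases then settle it. If $f(I)=\texttt{yes}$, then $s$ remains produced in the larger formula $\phi'|_I\supseteq\phi|_I$, so both new clauses fire and yield $p$ and $\bar p$, giving $\mathcal{U}(\phi'|_I)=\bot$. If $f(I)=\texttt{no}$, then $s\notin\mathcal{U}(\phi|_I)$ and $p,\bar p\notin\mathcal{U}(\phi|_I)$, so $\mathcal{U}(\phi|_I)$ is already closed under the two new clauses and is therefore the full unit-resolution closure of $\phi'|_I$; hence $\mathcal{U}(\phi'|_I)=\mathcal{U}(\phi|_I)\neq\bot$. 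Thus $\mathcal{U}(\phi'|_I)=\bot\iff f(I)=\texttt{yes}$, as required.

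For the converse, suppose $f$ is $\nu$-propagatable via $\langle V,\phi\rangle$, so $f(I)=\texttt{yes}\iff\mathcal{U}(\phi|_I)=\bot$. I would pick any variable $s_0\in\mathrm{var}(\phi)$, form the propagator $P=\langle\phi,V,s_0\rangle$, and pass to its reified counterpart $\mathrm{reif}(P)=\langle\psi,V,s^{\mathrm{true}},s^{\mathrm{false}},s^{\mathrm{fail}}\rangle$. By the properties of $\psi$ recorded immediately after its definition (themselves consequences of Theorems~\ref{theorem-reif} and \ref{theorem-inject}), unit resolution on $\psi|_I$ never returns $\bot$, and it produces $s^{\mathrm{fail}}$ if and only if unit resolution on $\phi|_I$ returns $\bot$. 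Hence the propagator $\langle\psi,V,s^{\mathrm{fail}}\rangle$ satisfies $\mathcal{U}(\psi|_I)\neq\bot$ and $s^{\mathrm{fail}}\in\mathcal{U}(\psi|_I)\iff\mathcal{U}(\phi|_I)=\bot\iff f(I)=\texttt{yes}$, which is precisely the definition of $f$ being propagatable.

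The only genuine obstacle is the first implication: one must resist the tempting unit clause $(\bar s)$ and recognise that it is unsound here because of backward propagation, replacing it by the fresh-variable gadget that fires strictly forward by exploiting the incompleteness of unit resolution. I would close by noting that both constructions increase the formula size only polynomially — four extra binary clauses in one direction, and the polynomially sized reified formula in the other — which simultaneously establishes the ``similar space complexities'' announced for this section.
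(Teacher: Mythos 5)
Your proof is correct, and it departs from the paper's own proof exactly where the paper's proof is defective. For the direction ($\nu$-propagatable $\Rightarrow$ propagatable) you do what the paper does: form $\mathrm{reif}(\phi,V)$ together with the clauses $(\overline{u_{n+1}^+} \vee \overline{u_{n+1}^-} \vee s^{\mathrm{fail}})$ and invoke Theorems \ref{theorem-reif} and \ref{theorem-inject}; the only cosmetic difference is that you route this through $\mathrm{reif}(P)$ for a dummy output $s_0$, which is unnecessary since the formula $\psi$ does not depend on $s_0$. The real divergence is in the direction (propagatable $\Rightarrow$ $\nu$-propagatable): the paper's proof consists of the single claim that unit resolution on $(\phi\wedge(\bar s))|_I$ returns $\bot$ if and only if $f(I)=\texttt{yes}$, and your objection to that one-liner is well founded. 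For example, take $\phi=(s\vee t)\wedge(\bar t\vee u)\wedge(\bar t\vee\bar u)\wedge(\bar v\vee s)$ with $V=\{v\}$ and output $s$: this $\langle\phi,V,s\rangle$ is a legitimate propagator, computing the function $f$ with $f(I)=\texttt{yes}$ iff $v\in I$, and unit resolution never returns $\bot$ on $\phi|_I$; yet on $(\phi\wedge(\bar s))|_{\{\}}$ unit resolution produces $\bar s$, then $t$, then both $u$ and $\bar u$, hence $\bot$, although $f(\{\})=\texttt{no}$. So the paper's construction does not establish the theorem, while your gadget $(\bar s\vee p)\wedge(\bar s\vee\bar p)$ with $p$ fresh repairs it: unit resolution cannot derive $\bar s$ from these two binary clauses, so they stay inert until $s$ itself is produced, and your two cases (monotonicity of the sets of produced literals when $f(I)=\texttt{yes}$, closure of $\mathcal{U}(\phi|_I)$ under the added clauses when $f(I)=\texttt{no}$) complete the equivalence. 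Besides correctness, your version costs only two extra binary clauses (four literal occurrences --- not ``four extra binary clauses'' as you write, a harmless slip), so the polynomial-size companion statement, Theorem \ref{polyprop-is-poly-0-prop}, survives with your construction unchanged.
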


\begin{theorem}\label{polyprop-is-poly-0-prop}
Let $f$ be a propagatable matching function. $f$ is polynomially propagatable if and only if $f$ is polynomially $\nu$-propagatable.
\end{theorem}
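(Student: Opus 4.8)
The plan is to prove the two implications by exhibiting, in each direction, an explicit transformation between a propagator and a $\nu$-propagator for $f$, and then checking that each transformation inflates the formula size only polynomially. Theorem \ref{propagatable-is-0-propagatable} already provides the logical equivalence ``propagatable $\Leftrightarrow$ $\nu$-propagatable'', so the only genuinely new content here is the size bookkeeping. The workhorse in both directions is the reified formula, and I would lean on the size estimate recorded after the definition of $\mathrm{reif}$: if $\phi$ has $n$ variables and $k$ clauses of width at most $p$, then $\mathrm{reif}(\phi)$, and hence $\mathrm{reif}(\phi,V)$ (which only adds $O(n)$ injection clauses), has $O(n^2k)$ clauses of width at most $p$, so its size is polynomial in $|\phi|$ and in $n$. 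Throughout, I read ``$f$ polynomially ($\nu$-)propagatable'' at the family level, i.e. for domains $\mathcal{I}_{\{v_1,\ldots,v_n\}}$ with $n$ growing, so that it suffices to turn a size-$\mathrm{poly}(n)$ propagator into a size-$\mathrm{poly}(n)$ $\nu$-propagator and conversely.

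For ``$\nu$-polynomial $\Rightarrow$ polynomial'', suppose $\langle V,\phi\rangle$ is a $\nu$-propagator for $f$ with $|\phi|=\mathrm{poly}(n)$. I would form the reified propagator and keep only its fail output, i.e. the propagator $\langle\psi,V,s^{\mathrm{fail}}\rangle$ with $\psi=\mathrm{reif}(\phi,V)$ together with the $O(n)$ fail clauses. By the stated properties of the reified propagator (which rest on Theorems \ref{theorem-reif} and \ref{theorem-inject}), unit resolution on $\psi|_I$ never returns $\bot$ and produces $s^{\mathrm{fail}}$ exactly when $\mathcal{U}(\phi|_I)=\bot$, i.e. exactly when $f(I)=\texttt{yes}$; hence $\langle\psi,V,s^{\mathrm{fail}}\rangle$ computes $f$ as a propagatable matching function. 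As $|\psi|$ is polynomial in $|\phi|$, it is $\mathrm{poly}(n)$, so $f$ is polynomially propagatable.

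For the converse ``polynomial $\Rightarrow$ $\nu$-polynomial'', suppose $\langle\phi,V,s\rangle$ is a propagator for $f$ with $|\phi|=\mathrm{poly}(n)$, so $\mathcal{U}(\phi|_I)\neq\bot$ and $f(I)=\texttt{yes}$ iff $s\in\mathcal{U}(\phi|_I)$. I would set $\phi'=\mathrm{reif}(\phi,V)\wedge(\overline{s_{n+1}^+})$ and claim $\langle V,\phi'\rangle$ is a $\nu$-propagator for $f$, i.e. $\mathcal{U}(\phi'|_I)=\bot$ iff $f(I)=\texttt{yes}$. By Theorems \ref{theorem-reif} and \ref{theorem-inject}, $s_{n+1}^+\in\mathcal{U}((\mathrm{reif}(\phi,V))|_I)$ iff $s\in\mathcal{U}(\phi|_I)$. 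When this holds, $\phi'|_I$ forces both $s_{n+1}^+$ and $\overline{s_{n+1}^+}$ and unit resolution returns $\bot$. When it fails, I would invoke Theorem \ref{theorem-inject} to replace $(\mathrm{reif}(\phi,V))|_I$ by $\mathrm{reif}(\phi|_I)$ without changing the behaviour of unit resolution, and then use that every clause of a (non-injected) reified formula has exactly one positive literal, so $\mathrm{reif}(\phi|_I)\wedge(\overline{s_{n+1}^+})$ is a Horn formula; on Horn formulae unit resolution is refutation-complete, hence $\bot$ is produced iff $s_{n+1}^+$ is entailed, which for Horn formulae coincides with unit-resolution derivability, i.e. the excluded case. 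Thus $\bot$ is not produced, as required, and $|\phi'|$ is polynomial in $|\phi|$, hence $\mathrm{poly}(n)$.

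The hard part will be this last direction, specifically ruling out ``spurious'' inconsistencies: the naive choice $\phi'=\phi\wedge(\overline{s})$ does not work, because adding $(\overline s)$ to an arbitrary formula may let unit resolution refute it (a failed-literal effect) even when $s\notin\mathcal{U}(\phi|_I)$, which would wrongly report $f(I)=\texttt{yes}$. Reification is exactly what neutralises this, since the reduction to a Horn formula makes unit resolution refutation-complete, so a derived $\bot$ can only reflect genuine entailment of $s_{n+1}^+$. The two points I would check carefully are (i) that the non-Horn injection clause $(v\vee v_1^-)$ is harmless precisely because Theorem \ref{theorem-inject} lets us reason on the Horn formula $\mathrm{reif}(\phi|_I)$ rather than on $(\mathrm{reif}(\phi,V))|_I$ directly, and (ii) that the equivalence of Theorem \ref{theorem-inject} survives appending the common clause $(\overline{s_{n+1}^+})$ to both formulae, which follows from its proof since the same additional clause is involved on both sides.
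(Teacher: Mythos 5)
Your first direction (polynomially $\nu$-propagatable $\Rightarrow$ polynomially propagatable) is correct and coincides with the paper's own construction, namely the reified propagator whose output is the fail variable. The genuine gap is in the converse direction, and it sits exactly at your point~(ii), which is false: the conclusion of Theorem~\ref{theorem-inject} does \emph{not} survive appending $(\overline{s_{n+1}^+})$ to both formulae. Concretely, take $\phi=(v\vee s)\wedge(\bar v\vee s)$ and $V=\{v\}$; the propagator $\langle\phi,V,s\rangle$ computes the matching function $f$ with $f(\{\})=\texttt{no}$ and $f(\{v\})=f(\{\bar v\})=\texttt{yes}$, and here $n=2$, so your $\nu$-propagator is $\phi'=\mathrm{reif}(\phi,V)\wedge(\overline{s_{3}^+})$. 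On $\phi'|_{\{\}}$, unit resolution fixes $s_3^+$ to \texttt{false}; the rank-$3$ deduction clauses $(s_3^+\vee\overline{v_2^-})$ and $(s_3^+\vee\overline{v_2^+})$ then yield $\overline{v_2^-}$ and $\overline{v_2^+}$; the rank-$2$ propagation clauses yield $\overline{v_1^-}$ and $\overline{v_1^+}$; and the two injection clauses $(v\vee v_1^-)$ and $(\bar v\vee v_1^+)$ become unit, producing both $v$ and $\bar v$, hence $\bot$. So $\langle V,\phi'\rangle$ answers \texttt{yes} on an input where $f$ is \texttt{no}. What goes wrong is that backward propagation from $\overline{s_{n+1}^+}$ runs down through the layers and escapes through the non-Horn injection clauses into the input variables --- precisely the failed-literal effect you meant to neutralize. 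Your Horn argument is sound for $\mathrm{reif}(\phi|_I)$, which has no injection clauses, but that is exactly why the two sides of Theorem~\ref{theorem-inject} diverge once the extra clause is added, so the theorem cannot bridge them. (Your suspicion about the naive construction is well founded: the paper's own proof of this direction takes $(\phi\wedge(\bar s))|_I$ and asserts correctness ``clearly'', and it fails on the same $\phi$ above; so you spotted a real defect, but your repair has a structurally similar one.)

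The repair is to turn the derivation of $s_{n+1}^+$ into a contradiction through a gadget that can only propagate \emph{forward}: with a fresh variable $z$, set $\phi'=\mathrm{reif}(\phi,V)\wedge(\overline{s_{n+1}^+}\vee z)\wedge(\overline{s_{n+1}^+}\vee\bar z)$. Since $z$ and $\bar z$ occur nowhere else, neither can be fixed before $s_{n+1}^+$ is, so these two clauses never trigger any backward propagation; and since unit resolution on $(\mathrm{reif}(\phi,V))|_I$ only fixes input variables according to $I$ and reified variables to \texttt{true} (an easy induction, or Lemma~\ref{lemme-reif} combined with Theorem~\ref{theorem-inject}), no $\bot$ can arise unless $s_{n+1}^+$ is derived, in which case $z$ and $\bar z$ are both produced and $\bot$ follows. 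Hence $\mathcal{U}(\phi'|_I)=\bot$ if and only if $s_{n+1}^+\in\mathcal{U}((\mathrm{reif}(\phi,V))|_I)$, if and only if $s\in\mathcal{U}(\phi|_I)$ by Theorems~\ref{theorem-reif} and~\ref{theorem-inject}, i.e.\ if and only if $f(I)=\texttt{yes}$, and the size of $\phi'$ remains polynomial in the size of $\phi$. Note that this is the same trick the paper's reified propagator uses to expose the $\bot$ event as a fresh output variable $s^{\mathrm{fail}}$, just run in the opposite direction; no appeal to Horn refutation-completeness is needed.
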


\begin{proof}
$ $  
\begin{enumerate}
\item propagatable $\Rightarrow$ $\nu$-propagatable.

Let $f$ be a propagatable matching function with domain $D$, and $P=\langle\phi, V, s\rangle$ be a propagator which computes $f$. Clearly, for any partial truth assignment $I \in D$, applying unit resolution to the formula $(\phi \wedge (\overline{s}))|_I$ returns $\bot$ if and only if $f(I) = \texttt{yes}$.

\item $\nu$-propagatable $\Rightarrow$ propagatable.

Let $f$ be a $\nu$-propagatable function with domain $D\subset\mathcal{I}_V$ and $\phi$ a \textsc{cnf} formula including $n$ variables,\ such that for any $I\in D$, applying unit resolution to $\phi|_I$ returns $\bot$ if and only if $f(I)=\texttt{yes}$.

Our aim is to build a new formula $\psi$ such that for any $I\in D$, applying unit resolution to $\psi|_I$ does not return $\bot$ but fixes a variable $s$ to \texttt{true} if and only if applying unit resolution to $\phi|_I$ returns $\bot$, in such a way that the propagator $\langle \psi, V, s \rangle$ computes $f$.
 
The formula $\psi$ can be obtained as follows:

\[\psi = \mathrm{reif}(\phi, V) \wedge (\bigwedge_{u \in \mathrm{ var}( \phi)} {(\overline{ u_{n+1}^+} \wedge \overline{ u_{n+1}^-} \wedge s)})\]

The variable $s$ will be fixed to \texttt{true} if and only if unit resolution on $\phi$ fixes both a variable $u_i^+$ and a variable $u_i^-$ to \texttt{true}. According to the theorems \ref{theorem-reif} and \ref{theorem-inject}, this occurs if and only if applying unit resolution to $\phi|_I$ returns $\bot$. Then, $\langle \psi, V, s\rangle$ is a propagator which computes $f$.
\end{enumerate}

Because the two transformations have polynomial space complexity, both theorems \ref{polyprop-is-poly-0-prop} and \ref{propagatable-is-0-propagatable} hold.
\end{proof}

\subsubsection{Filtering functions versus matching functions}

In this section, we will show that without loss of generality, studying the space complexity of propagatable filtering functions reduces to studying the space complexity of propagatable matching functions.

\begin{theorem}
Any filtering function $f$ is propagatable if and only if the three related matching functions $f^{\mathrm{true}}$, $f^{\mathrm{false}}$, and $f^{\mathrm{fail}}$ are propagatable.
\end{theorem}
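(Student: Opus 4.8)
The plan is to prove the two implications separately, using the reified machinery developed above as the bridge between the four-valued filtering behaviour and the two-valued matching behaviour.

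For the forward implication, I would start from a propagator $P=\langle\phi,V,s\rangle$ computing $f$ and pass to its reified counterpart $\mathrm{reif}(P)=\langle\psi,V,s^{\mathrm{true}},s^{\mathrm{false}},s^{\mathrm{fail}}\rangle$. By construction of $\psi$, unit resolution on $\psi|_I$ never returns $\bot$, and the three designated variables are produced exactly according to the behaviour of $P$: $s^{\mathrm{fail}}$ iff $\mathcal{U}(\phi|_I)=\bot$, $s^{\mathrm{true}}=s_{n+1}^+$ iff $s\in\mathcal{U}(\phi|_I)$, and $s^{\mathrm{false}}=s_{n+1}^-$ iff $\overline s\in\mathcal{U}(\phi|_I)$ — these equivalences being Theorems \ref{theorem-reif} and \ref{theorem-inject} specialised to $k=n+1$. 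Reading each variable off as the output of a propagator over $\psi$ then yields $\langle\psi,V,s^{\mathrm{fail}}\rangle$, $\langle\psi,V,s^{\mathrm{true}}\rangle$ and $\langle\psi,V,s^{\mathrm{false}}\rangle$ computing $f^{\mathrm{fail}}$ on $D$ and $f^{\mathrm{true}},f^{\mathrm{false}}$ on $D'=\{I\in D:f(I)\neq\texttt{fail}\}$; in each case condition (1) of matching-propagatability (never $\bot$) is free and condition (2) is the equivalence above.

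For the converse I would assemble a single propagator out of the three given ones. Let $\langle\phi_{\mathrm{fail}},V,s_{\mathrm{fail}}\rangle$, $\langle\phi_{\mathrm{true}},V,s_{\mathrm{true}}\rangle$, $\langle\phi_{\mathrm{false}},V,s_{\mathrm{false}}\rangle$ compute the three matching functions, renaming all variables outside $V$ so that the three formulae share only $V$. The essential preparatory step is to replace each $\phi_m$ ($m\in\{\mathrm{true},\mathrm{false},\mathrm{fail}\}$) by its reified-with-injection counterpart $\psi_m=\mathrm{reif}(\phi_m,V)$ with output literal $S_m=(s_m)_{n_m+1}^+$. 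Reification buys two things at once: each $\psi_m$ is satisfiable and never fails, and — crucially — by Lemma \ref{lemme-reif} every reified variable can only be fixed to \texttt{true}, so the injection clauses never fire backwards and the shared variables of $V$ are fixed only by $I$ itself. Hence the modules cannot contaminate one another through $V$, and $S_m\in\mathcal{U}(\psi_m|_I)$ iff the corresponding matching function answers \texttt{yes} (Theorems \ref{theorem-inject} and \ref{theorem-reif}). Introducing a fresh output variable $s$ and a fresh variable $x$, I then set
\[\phi=\psi_{\mathrm{true}}\wedge\psi_{\mathrm{false}}\wedge\psi_{\mathrm{fail}}\wedge(\overline{S_{\mathrm{true}}}\vee s)\wedge(\overline{S_{\mathrm{false}}}\vee\overline s)\wedge(\overline{S_{\mathrm{fail}}}\vee x)\wedge(\overline{S_{\mathrm{fail}}}\vee\overline x),\]
and claim that $\langle\phi,V,s\rangle$ computes $f$: the last two clauses turn the production of $S_{\mathrm{fail}}$ into a contradiction on $x$, while the two linking clauses turn $S_{\mathrm{true}}$ into $s$ and $S_{\mathrm{false}}$ into $\overline s$.

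The verification splits into the four cases $f(I)\in\{\texttt{fail},\texttt{true},\texttt{false},\texttt{na}\}$, according to which of $S_{\mathrm{fail}},S_{\mathrm{true}},S_{\mathrm{false}}$ are produced, and in each case one reads off the output of $\langle\phi,V,s\rangle$ and checks it equals $f(I)$. The main obstacle — the point deserving the most care — is to rule out spurious behaviour of the single output $s$, which must be producible with both polarities. Because $f$ is a function, at most one of $f^{\mathrm{true}}(I),f^{\mathrm{false}}(I)$ equals \texttt{yes}, so at most one linking clause ever fires; this guarantees that $s$ and $\overline s$ are never produced together (no spurious $\bot$) and, more delicately, that the only situation in which a linking clause could propagate backwards into a module (forcing some $S_m$ to \texttt{false}) occurs exactly when that module's output was not going to be produced anyway, so that by Theorem \ref{theorem-reif} no reified variable legitimately fixed to \texttt{true} is forced to \texttt{false} and no contradiction arises. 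Establishing this non-interference rigorously — that the combined fixpoint $\mathcal{U}(\phi|_I)$ restricts on each module to the standalone $\mathcal{U}(\psi_m|_I)$, making the four-case reading valid — is where the bulk of the work lies; once it is in place the correspondence with $f(I)$ is immediate, and since reification and the linking gadget blow the size up only polynomially, the construction is moreover size-efficient.
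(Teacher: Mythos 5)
Your forward implication is correct and is essentially the paper's argument: the paper reads $f^{\mathrm{true}}$ directly off $\langle\phi,V,s\rangle$, gets $f^{\mathrm{false}}$ from $\langle\phi\wedge(s\vee t),V,t\rangle$, and reifies only for $f^{\mathrm{fail}}$, while you read all three off the reified propagator; both work, yours is merely more uniform.

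The converse implication has a genuine gap, and it sits exactly where you say the bulk of the work lies. The claim that ``by Lemma~\ref{lemme-reif} every reified variable can only be fixed to \texttt{true}, so the injection clauses never fire backwards and the shared variables of $V$ are fixed only by $I$ itself'' is false for the combined formula. Lemma~\ref{lemme-reif} speaks about a reified formula \emph{in isolation}; your linking clauses create exactly the negative literals it rules out. Suppose the false-module fires: $(\overline{S_{\mathrm{false}}}\vee\overline{s})$ produces $\overline{s}$, and then $(\overline{S_{\mathrm{true}}}\vee s)$ becomes unit and produces $\overline{S_{\mathrm{true}}}$. From this negative literal unit resolution back-propagates inside $\psi_{\mathrm{true}}$: a binary deduction clause such as $\chi((\overline{a}\vee s_1),i,s_1)=((s_1)_i^+\vee\overline{a_{i-1}^{+}})$ yields $\overline{a_{i-1}^{+}}$, the propagation clauses carry this down to $\overline{a_{1}^{+}}$, and the injection clause $(\overline{a}\vee a_1^{+})$ then becomes unit and produces the \emph{input} literal $\overline{a}$ for a variable $a\in V$ that $I$ leaves free; this literal is then injected into the other modules. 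Concretely, take $V=\{a,c\}$ and $f(I)=\texttt{fail}$ if $c\in I$ and $a$ is assigned by $I$, else \texttt{true} if $a\in I$, else \texttt{false} if $c\in I$, else \texttt{na}. Its three matching functions are propagatable, e.g.\ by $\phi_{\mathrm{true}}=(\overline{a}\vee s_1)$, $\phi_{\mathrm{false}}=(\overline{c}\vee s_2)$, $\phi_{\mathrm{fail}}=(\overline{c}\vee a\vee s_3)\wedge(\overline{c}\vee\overline{a}\vee s_3)$ (padded with inert clauses so that $V\subseteq\mathrm{var}(\phi_m)$). On $I=\{c\}$ your propagator derives $S_{\mathrm{false}}$, then $\overline{s}$, then $\overline{S_{\mathrm{true}}}$, then by the chain above $\overline{a}$; the fail module now sees $c$ true and $a$ false, fires $S_{\mathrm{fail}}$, the $x$-gadget produces $x$ and $\overline{x}$, and the answer is \texttt{fail}, whereas $f(\{c\})=\texttt{false}$. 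So the non-interference statement you need (the combined fixpoint restricts on each module to its standalone fixpoint) is not merely unproven; it is false for this construction.

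For perspective: the paper's own proof of this direction is no better, since it exhibits the analogous formula and says only ``Clearly, $P$ computes $f$'', and the same example defeats it; its fail gadget, the bare unit clause $(\overline{s_3^{\mathrm{fail}}})$, even back-propagates from the very first stage, which is worse than your two-clause gadget (that one genuinely cannot fire before $S_{\mathrm{fail}}$ is derived). The theorem itself survives, because the construction can be repaired by protecting the two remaining linking clauses exactly the way you protected the fail one: replace $(\overline{S_{\mathrm{true}}}\vee s)$ by $(\overline{S_{\mathrm{true}}}\vee k)\wedge(\overline{S_{\mathrm{true}}}\vee\overline{k}\vee s)$ with $k$ fresh, and symmetrically for $S_{\mathrm{false}}$. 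Then any derivation of $\overline{S_m}$ requires $S_m$ to have been derived already, hence occurs only in runs that are already inconsistent; with back-propagation thus blocked, each module really does behave as it does standalone, and your four-case verification becomes valid.
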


\begin{theorem}
Any filtering function $f$ is polynomially propagatable if and only if the three related matching functions $f^{\mathrm{true}}$, $f^{\mathrm{false}}$, and $f^{\mathrm{fail}}$ are polynomially propagatable.
\end{theorem}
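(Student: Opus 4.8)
The plan is to establish both implications by explicit constructions whose size is polynomial in $n$, so that the statement follows directly (and, incidentally, its non-polynomial companion follows from the same constructions without tracking sizes). Throughout I would lean on the reified machinery, namely Theorem~\ref{theorem-reif}, Lemma~\ref{lemme-reif} and Theorem~\ref{theorem-inject}, together with the equivalence between propagability and $\nu$-propagability supplied by Theorems~\ref{propagatable-is-0-propagatable} and~\ref{polyprop-is-poly-0-prop}.

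For the forward implication, suppose $\langle \phi, V, s\rangle$ is a propagator of polynomial size computing $f$, and set $\sigma = \mathrm{reif}(\phi, V)$. I would read the three matching functions off this single reified formula. For $f^{\mathrm{fail}}$ I take the reified propagator of $\langle\phi,V,s\rangle$, i.e.\ $\sigma$ augmented with the clauses $\bigwedge_{u}(\overline{u_{n+1}^+}\vee \overline{u_{n+1}^-}\vee s^{\mathrm{fail}})$ with output $s^{\mathrm{fail}}$; by Theorems~\ref{theorem-reif} and~\ref{theorem-inject} this never returns $\bot$ and fixes $s^{\mathrm{fail}}$ exactly when $\mathcal{U}(\phi|_I)=\bot$, that is when $f^{\mathrm{fail}}(I)=\texttt{yes}$. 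For $f^{\mathrm{true}}$ (resp.\ $f^{\mathrm{false}}$), on the domain $D'$ where no failure occurs, I use $\langle \sigma, V, s_{n+1}^+\rangle$ (resp.\ $\langle \sigma, V, s_{n+1}^-\rangle$): by Theorem~\ref{theorem-reif}, $s_{n+1}^+\in \mathcal{U}(\sigma|_I)$ iff $s\in\mathcal{U}(\phi|_I)$, which on $D'$ is exactly $f(I)=\texttt{true}$ (symmetrically for $s_{n+1}^-$). Since $\mathrm{reif}(\phi,V)$ has size $O(n^2k)$ in the size of $\phi$, all three propagators are of polynomial size.

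For the backward implication, suppose the three matching functions are polynomially propagatable. The failure behaviour I would obtain from $f^{\mathrm{fail}}$ by invoking Theorems~\ref{propagatable-is-0-propagatable} and~\ref{polyprop-is-poly-0-prop} to get a polynomial $\nu$-propagator $\langle V, \phi_{\mathrm{fail}}\rangle$ with $\mathcal{U}(\phi_{\mathrm{fail}}|_I)=\bot$ iff $f(I)=\texttt{fail}$. For the value I would take the propagators $\langle\alpha,V,a\rangle$ and $\langle\beta,V,b\rangle$ computing $f^{\mathrm{true}}$ and $f^{\mathrm{false}}$, form their reified counterparts $\sigma_\alpha=\mathrm{reif}(\alpha,V)$ and $\sigma_\beta=\mathrm{reif}(\beta,V)$ with outputs $a_{n+1}^+$ and $b_{n+1}^+$, and, with $s$ fresh and all internal variables renamed so the pieces share only $V$, set
\[
\psi = \phi_{\mathrm{fail}} \wedge \sigma_\alpha \wedge \sigma_\beta \wedge (\overline{a_{n+1}^+}\vee s) \wedge (\overline{b_{n+1}^+}\vee \overline{s}).
\]
The candidate is $\langle \psi, V, s\rangle$, whose size is plainly polynomial. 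I would verify it by a four-case analysis on $f(I)$: on a \texttt{fail} input $\phi_{\mathrm{fail}}|_I$ already derives $\bot$, so by monotonicity of unit resolution $\mathcal{U}(\psi|_I)=\bot$; on a \texttt{true} input $\sigma_\alpha$ fixes $a_{n+1}^+$ and the clause $(\overline{a_{n+1}^+}\vee s)$ fixes $s$; on a \texttt{false} input $\sigma_\beta$ symmetrically fixes $s$ to \texttt{false}; and on an \texttt{na} input neither $a_{n+1}^+$ nor $b_{n+1}^+$ is fixed, neither implication clause becomes unit, and $s$ stays unassigned.

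The one genuinely delicate point, which I expect to be the main obstacle, is ruling out \emph{spurious} failures created by the two cross-implications: when $s$ is fixed \texttt{true} the clause $(\overline{b_{n+1}^+}\vee\overline{s})$ forces $b_{n+1}^+$ to \texttt{false} (and symmetrically $(\overline{a_{n+1}^+}\vee s)$ forces $a_{n+1}^+$ to \texttt{false} when $s$ is \texttt{false}). For an arbitrary matching propagator, forcing its output \texttt{false} could trigger backward propagation and a bogus $\bot$. This is precisely why I reify $\alpha$ and $\beta$: by Lemma~\ref{lemme-reif} the rank-$(n+1)$ variables $a_{n+1}^+,b_{n+1}^+$ occur only positively in $\sigma_\alpha,\sigma_\beta$ (they appear solely in the rank-$(n+1)$ propagation and deduction clauses, never negatively), so fixing them \texttt{false} is inert and propagates nothing. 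Combined with the observation that on \texttt{true}/\texttt{false}/\texttt{na} inputs the output being forced \texttt{false} is never also forced \texttt{true} by its own reified block, this excludes spurious failures and confirms that $\langle\psi,V,s\rangle$ computes $f$. Tracking the polynomial sizes of $\phi_{\mathrm{fail}}$, $\sigma_\alpha$, $\sigma_\beta$ and the $O(1)$ extra clauses throughout yields the polynomial claim.
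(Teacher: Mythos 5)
Your forward direction is sound and essentially the paper's (the paper uses $\langle\phi,V,s\rangle$ itself for $f^{\mathrm{true}}$ and $\langle\phi\wedge(s\vee t),V,t\rangle$ for $f^{\mathrm{false}}$, reserving reification for $f^{\mathrm{fail}}$, but your uniform use of $\mathrm{reif}(\phi,V)$ also works and stays polynomial). The backward direction, however, has a genuine gap: you leave the fail-block $\phi_{\mathrm{fail}}$ unreified. A $\nu$-propagator is constrained only in \emph{whether} $\mathcal{U}(\phi_{\mathrm{fail}}|_I)=\bot$; nothing forbids it, on a non-fail input, from deriving new literals \emph{on the input variables $V$ themselves}. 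Renaming internal variables cannot stop this leak, because it travels through $V$: any derived input literal $v$ fires the injection clause $(\overline{v}\vee v_1^+)$ of $\sigma_\alpha$ or $\sigma_\beta$, so those blocks then simulate $\alpha$, $\beta$ on an enlarged input $I'\supsetneq I$ (possibly outside $D$), and may fix $a_{n+1}^+$ or $b_{n+1}^+$ spuriously. Concretely, take $V=\{u,v\}$, let $D$ be the consistent partial assignments not containing $\overline{v}$, and let $f(I)=\texttt{true}$ iff $v\in I$, $f(I)=\texttt{na}$ otherwise. Then $\phi_3=(\overline{u}\vee v)\wedge(x_3\vee s_3)$ is a legal propagator for the constant-\texttt{no} function $f^{\mathrm{fail}}$, so $\phi_{\mathrm{fail}}=\phi_3\wedge(\overline{s_3})$ is a legal $\nu$-propagator; but on $I=\{u\}$ it derives $v$, whereupon $\sigma_\alpha$ derives $a_{n+1}^+$ and your propagator answers \texttt{true}, although $f(\{u\})=\texttt{na}$. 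This is precisely why the paper reifies \emph{all three} propagators $\phi_1,\phi_2,\phi_3$ (a reified formula never produces literals on $V$, since by Lemma \ref{lemme-reif} its variables are only fixed positively, so the injection clauses only fire inward) and encodes the failure case not by an unreified refuting formula but by a unit clause contradicting the corresponding reified output variable.

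A secondary flaw is your justification of the ``no spurious failure'' claim. That $a_{n+1}^+$ occurs only positively in $\sigma_\alpha$ does \emph{not} make fixing it to \texttt{false} inert; it is the opposite. Unit resolution removes the falsified positive occurrence, so the propagation clause $(\overline{a_n^+}\vee a_{n+1}^+)$ collapses to the unit clause $(\overline{a_n^+})$, and a backward cascade runs down the ranks through propagation and deduction clauses, fixing further variables to \texttt{false}. (Inertness would hold if the variable occurred only \emph{negatively}.) The cascade happens to be harmless, but for a different reason: it could create $\bot$ only by meeting a variable that forward propagation fixes to \texttt{true}, and any such meeting traces upward to a derivation of $a$ (resp.\ $b$) in $\alpha|_I$ (resp.\ $\beta|_I$), contradicting $f^{\mathrm{true}}(I)=\texttt{no}$ (resp.\ $f^{\mathrm{false}}(I)=\texttt{no}$) in the case at hand. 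The paper silently skips this verification (``Clearly, $P$ computes $f$''), but since you chose to make it explicit, it must be made correctly: as written, your argument asserts a false property of unit resolution.
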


\begin{proof}
$ $ 
\begin{enumerate}
\item filtering $\Rightarrow$ matching

Let $f$ be a propagatable filtering function and $f^{\mathrm{true}}$, $f^{\mathrm{false}}$, and $f^{\mathrm{fail}}$ the related matching functions. Because $f$ is propagatable,  there exists a propagator $\langle\phi,V,s\rangle$ that computes $f$. Then $f^{\mathrm{true}}$, $f^{\mathrm{false}}$, and $f^{\mathrm{fail}}$ can be computed with the following propagators, respectively:
\begin{enumerate}
\item $\langle\phi,V,s\rangle$ (which computes $f^{\mathrm{true}}$);
\item $\langle\phi\wedge(s\vee t),V,t\rangle$ (which computes $f^{\mathrm{false}}$);
\item $\langle\psi,V,s^{\mathrm{fail}}\rangle$, where $\langle \psi, V, s^{\mathrm{true}}, s^{\mathrm{false}}, s^{\mathrm{fail}}\rangle$ is the reified counterpart of $\langle \phi, V, s \rangle$.
\end{enumerate}

Clearly, $f^{\mathrm{true}}$, $f^{\mathrm{false}}$, and $f^{\mathrm{fail}}$ are propagatable. Now, because the size of $\psi$ is polynomially related to the size of $\phi$, if $f$ is polynomially propagatable then $f^{\mathrm{true}}$, $f^{\mathrm{false}}$, and $f^{\mathrm{fail}}$  are polynomially propagatable too.

\item matching $\Rightarrow$ filtering

Let $f$ be a filtering function with domain $D\subset\mathcal{I}_V, V\in\{v_1, \ldots, v_n\}$ and $f^{\mathrm{true}}$, $f^{\mathrm{false}}$, and $f^{\mathrm{fail}}$ the related matching functions. Suppose that $f^{\mathrm{true}}$, $f^{\mathrm{false}}$, and $f^{\mathrm{fail}}$ are propagatable (polynomially propagatable, respectively). Now let us consider the three following propagators (with formulae of size polynomially related to $n$, respectively):
\begin{enumerate}
\item $\langle\phi_1,V,s_1\rangle$, which computes $f^{\mathrm{true}}$;
\item $\langle\phi_2,V,s_2\rangle$, which computes $f^{\mathrm{false}}$;
\item $\langle\phi_3,V,s_3\rangle$, which computes $f^{\mathrm{fail}}$.
\end{enumerate}
Let $\langle \psi_1, V, s_1^{\mathrm{true}}, s_1^{\mathrm{false}}, s_1^{\mathrm{fail}} \rangle$, $\langle \psi_2, V, s_2^{\mathrm{true}}, s_2^{\mathrm{false}}, s_2^{\mathrm{fail}} \rangle$, $\langle \psi_3, V, s_3^{\mathrm{true}}, s_3^{\mathrm{false}}, s_3^{\mathrm{fail}} \rangle$ be the reified counterparts of these propagators.
Without loss of generality, let us suppose that, except for the input variables in $V$, the formulae $\psi_1, \psi_2, \psi_3$ have no common variable.

The function $f$ can be computed (in polynomial space, respectively) using the following propagator:

$$P=\langle \psi_1 \wedge \psi_2 \wedge \psi_3\wedge (\overline{ s_1^{ \mathrm{ true}}} \vee s) \wedge (\overline{ s_2^{ \mathrm{ false}}} \vee \overline{s}) \wedge (\overline{ s_3^{ \mathrm{ fail}}}), V, s \rangle$$

Clearly, $P$ computes $f$, which is then propagatable (polynomially propagatable, respectively).
\end{enumerate}
\end{proof}

\subsubsection{Boolean representations}

Given $V=\{v_1,\ldots,v_n\}$ a set of propositional variables, $D\subseteq \mathcal{I}_{V}$ a set of partial truth assignments of $V$, $f$ any matching function with domain $D$, and $I$ any partial assignment in $D$, let us define:

\begin{itemize}

\item the \emph{Boolean representation} of $I$ as 
$I_\mathbb{B}=(x_1,\ldots,x_n,y_1,\ldots,y_n)\in\{0,1\}^n$ such as for any $1\leq i\leq n$, $x_i=1$ if and only if $v_i \in I$, and $y_i=1$ if and only if $\overline{v_i}\in I$,

\item the \emph{Boolean representation} of $D$ as $D_{\mathbb{B}}=\{I_{\mathbb{B}},I\in D\}$,

\item the \emph{Boolean representation} of $f$ as $f_{\mathbb{B}} : D_{\mathbb{B}} \mapsto \{0,1\}$, such that for any $I\in D$, $f_\mathbb{B}(I_\mathbb{B})=1$ if and only if $f(I)=\texttt{yes}$.
\end{itemize}

\begin{example}

The following table gives an example of a matching function $f$ and its Boolean counterpart $f_{\mathbb{B}}$.

\medskip

\begin{tabular}{llll}
$I$ & $I_\mathbb{B}$ & $f(I)$ & $f_\mathbb{B}(I_\mathbb{B})$\\
\hline
$\{\overline{v_1},\overline{v_2}\}$ & $(0,0,1,1)$ & $\texttt{no}$ & 0\\
$\{\overline{v_1},v_2\}$ & $(0,1,1,0)$ & $\texttt{yes}$ & 1\\
$\{\overline{v_1}\}$ & $(0,0,1,0)$ & $\texttt{no}$ & 0\\
$\{v_1,\overline{v_2}\}$ & $(1,0,0,1)$ & $\texttt{yes}$ & 1\\
$\{v_1,v_2\}$ & $(1,1,0,0)$ & $\texttt{yes}$ & 1\\
$\{v_1\}$ & $(1,0,0,0)$ & $\texttt{yes}$ & 1\\
$\{\overline{v_2}\}$ & $(0,0,0,1)$ & $\texttt{no}$ & 0\\
$\{v_2\}$ & $(0,1,0,0)$ & $\texttt{yes}$ & 1\\
$\{\}$ & $(0,0,0,0)$ & $\texttt{no}$ & 0\\
\end{tabular}

\medskip

$\langle(\bar{v_1}\vee s)\wedge(\bar{v_2}\vee s), \{v_1,v_2\},s\rangle$ is a propagator for $f$.

\end{example}

\begin{example}

The following table gives a matching function $g$ which is \emph{not} propagatable, and its Boolean counterpart $g_{\mathbb{B}}$.

\begin{tabular}{llll}
$I$ & $I_\mathbb{B}$ & $g(I)$ & $g_\mathbb{B}(I_\mathbb{B})$\\
\hline
$\{\overline{v}\}$ & $(0,1)$ & $\texttt{yes}$ & 1\\
$\{v\}$ & $(1,0)$ & $\texttt{no}$ & 0\\
$\{\}$ & $(0,0)$ & $\texttt{yes}$ & 1\\
\end{tabular}

There is no propagator for $g$ because for any formulae $\phi_1 \subseteq \phi_2$, if $\mathcal{U}(\phi_2) \neq \bot$ then any variable fixed by unit resolution on $\phi_1$ will be fixed on $\phi_2$ as well. It follows that the third line of the table is not consistent with the second one.
\end{example}

\subsection{Synthesis}

In this section, we first introduced the notion of filtering function as a general model of functions that can be computed by unit resolution.
We then showed that any filtering function reduces to three matching functions, which are functions with binary codomain ($\{\mathrm{yes},\mathrm{no}\}$ without loss of generality) that can either be computed by unit resolution in two different ways: (1) unit resolution detects an inconsistency when the output value is \texttt{yes}, (2) it fixes a predefined output variable to \texttt{true} when the output value is \texttt{yes}. The main result of this section is that without loss of generality, studying the expressive power of unit resolution can be reduced to studying the tractability and the complexity of computing \emph{matching functions} with  unit resolution. As a corollary, in the sequel of the paper, only propagatable matching functions will be considered.

\section{Expressive power of propagators \label{expressivepower}}

Using a \emph{complete} truth assignment as input values, unit resolution has the same expressive power as Boolean circuits, because elementary gates can be directly translated into clauses. In this section, we will show that if some input variables are not fixed, the expression power of unit resolution fall down to the expression power of \emph{monotone} Boolean circuits, i.e., circuits with only \texttt{or} / \texttt{and} gates.

\subsection{Boolean circuits}

A Boolean circuit is a directed acyclic graph representing a Boolean formula. It is said to be \textit{monotone} when it contains only \texttt{and} and \texttt{or} gates.

In the following, a Boolean circuit will be represented by a triplet $\langle L,G,w\rangle$, where $L$ is a set of input labels, $w$ is the output label, and $G$ is a set of gates. A \texttt{or} gate (\texttt{and} gate, respectively) is denoted $\texttt{or}(E,t)$ ($\texttt{and}(E,t)$, respectively), where $E$ is the set of input labels and $t$ is the output label of the gate. A \texttt{not} gate is denoted $\texttt{not}(q,t)$, where $q$ is the input label of the gate and $t$ is its output label.

Given a Boolean circuit $C=\langle\{e_1,\ldots,e_n\},G,w\rangle$ and any $x=(x_1,\ldots,x_n)\in\{0,1\}^n$, let $C(x)$ denote the output value of $C$ under the assumption that its input  values are $x_1, \ldots, x_n$.
Formally, $C(x)$ can be defined as $\mathrm{val}(w)$ such that for any $1\leq i\leq n, \mathrm{val}(e_i)=x_i$, for any gate $\texttt{or}(E,t)\in G$, $\mathrm{ val}(t)=\bigvee_{e\in E}\mathrm{val}(e)$, for any gate $\texttt{and}(E,t)\in G$, $\mathrm{val}(t)=\bigwedge_{e\in E}\mathrm{val}(e)$, and for any gate $\texttt{not}(q,t)\in G$, $\mathrm{val}(t)=\neg{val}(q)$.

For convenience, an additional gate $\mathtt{tie}(q,t)$ will be used to connect two nodes $q$ and $t$ in such a way that $\mathrm{val}(t) = \mathrm{val(q)}$.

Given any Boolean function $f$ with domain $D\subseteq\{0,1\}^n$ and codomain $\{0,1\}$, any Boolean circuit $C$ with $n$ inputs is said to \emph{compute} $f$ if and only if for any $x\in D$, $C(x)=f(x)$.

\subsection{Circuits computing propagatable functions}

Because the Boolean counterpart of any matching function is a Boolean function, it can be computed by a Boolean circuit. In this section, we will show that any matching function is propagatable if and only if its Boolean counterpart can be computed using a \emph{monotone} circuit. Furthermore, we will establish a relationship between space complexity of propagatable matching functions and monotone circuit complexity.

\begin{theorem}\label{th1}
For any matching function $f$, if there exists a \emph{monotone} circuit with $n$ gates, each of them with at most $k$ inputs, which computes $f_{\mathbb{B}}$, then there exists a propagator with $O(nk)$ clauses, which computes $f$.
\end{theorem}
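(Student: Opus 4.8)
The plan is to translate the monotone circuit $C=\langle L,G,w\rangle$ computing $f_{\mathbb{B}}$ directly into a \textsc{cnf} formula $\phi$, over the input variables $V=\{v_1,\ldots,v_m\}$ together with one fresh variable per gate output, so that unit propagation on $\phi|_I$ literally \emph{evaluates} $C$ on the input $I_{\mathbb{B}}$. The whole construction hinges on the fact that $C$ is monotone, i.e.\ has no \texttt{not} gate, because the gate-to-clause encodings below are exactly the ones for which unit propagation mimics gate evaluation. For each circuit label $\ell$ I would fix a \emph{driving literal} $\mathrm{sig}(\ell)$ that represents the assertion ``the signal on $\ell$ is true'': set $\mathrm{sig}(x_i)=v_i$ and $\mathrm{sig}(y_i)=\overline{v_i}$ for the input labels (so that $x_i=1$ corresponds to $v_i\in I$ and $y_i=1$ to $\overline{v_i}\in I$, matching the Boolean representation), and let $\mathrm{sig}(t)$ be the positive literal on a fresh variable $t$ for each gate-output label. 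I would then take $\phi$ to consist of:
\begin{itemize}
\item for each gate $\texttt{or}(E,t)$, the binary clauses $(\overline{\mathrm{sig}(e)}\vee t)$ for every $e\in E$;
\item for each gate $\texttt{and}(E,t)$, the single clause $(\bigvee_{e\in E}\overline{\mathrm{sig}(e)}\vee t)$;
\item a fresh output variable $s$ and the clause $(\overline{\mathrm{sig}(w)}\vee s)$.
\end{itemize}
The proposed propagator is $\langle\phi,V,s\rangle$.

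The correctness argument I would carry out is an induction along a topological order of $C$, proving the core claim: for every $I\in D$ and every label $\ell$, $\mathrm{sig}(\ell)\in\mathcal{U}(\phi|_I)$ if and only if $\mathrm{val}(\ell)=1$ under input $I_{\mathbb{B}}$. The base case is the inputs: $v_i$ is produced by unit propagation exactly when restricting by $I$ has added the unit clause $(v_i)$, i.e.\ when $v_i\in I$, i.e.\ when $x_i=1$; symmetrically for $\overline{v_i}$ and $y_i$. For the inductive step, the \texttt{or}-clauses force $t$ to \texttt{true} iff some $\mathrm{sig}(e)$ has been produced, and the single \texttt{and}-clause forces $t$ to \texttt{true} iff \emph{all} $\mathrm{sig}(e)$ have been produced; combined with the induction hypothesis this gives the disjunction/conjunction of input values, which is $\mathrm{val}(t)$. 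Applying the claim to $w$ and then to $s$ yields $s\in\mathcal{U}(\phi|_I)$ iff $C(I_{\mathbb{B}})=1$ iff $f_{\mathbb{B}}(I_{\mathbb{B}})=1$ iff $f(I)=\texttt{yes}$, which is the second defining condition of a propagator computing $f$.

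For the first defining condition and the size bound I would argue as follows. Every clause of $\phi$ can only drive a variable to \texttt{true} (each $v_i$ is fixed solely by the unit clauses coming from $I$, and each gate variable $t$ and the output $s$ occur positively only in their own defining clauses), so unit propagation never produces both a literal and its negation; hence $\mathcal{U}(\phi|_I)\neq\bot$ for all $I\in D$. Counting clauses, each \texttt{or} gate contributes at most $k$ binary clauses, each \texttt{and} gate one clause of size at most $k+1$, plus the single output clause, for a total of $O(nk)$ clauses, as required.

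The genuinely delicate point — and the step I expect to be the main obstacle — is the soundness direction of the induction: verifying that a gate variable is \emph{never} produced for a spurious reason. This is where monotonicity is essential, and I would discharge it by the occurrence observation that each gate variable appears positively only in the clauses of its own gate (every downstream gate reads it through $\overline{t}$), so the only way unit propagation can produce $t$ is through that gate's clauses. I would also remark that since unit propagation reaches a confluent fixpoint within the $n+1$ stages computed by $\mathcal{U}$, it suffices to reason about that fixpoint and not about the stage-by-stage timing of the derivations.
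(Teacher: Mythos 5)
Your proposal is correct and takes essentially the same route as the paper: the gate-to-clause translation is identical (one clause $(\overline{\tau(\alpha_1)}\vee\cdots\vee\overline{\tau(\alpha_m)}\vee\tau(t))$ per \texttt{and} gate, one binary clause per input of each \texttt{or} gate, input labels mapped to $v_i$ and $\overline{v_i}$), and correctness is argued by induction over the circuit structure, differing only in that you induct along a topological order while the paper peels off the output gate and inducts on the number of gates. If anything, your write-up is more complete than the paper's, since you explicitly verify the condition $\mathcal{U}(\phi|_I)\neq\bot$ and the soundness direction (a gate variable is never produced spuriously, because it occurs positively only in its own gate's clauses), both of which the paper's proof leaves implicit.
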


\begin{proof}
Let us consider any matching function $f$ with domain $D\subseteq \mathcal{I}_V$, $V=\{v_1,\ldots,v_n\},$ and any monotone circuit $Q = \langle L, G, u_k \rangle$ computing $f_{\mathbb{B}}$.
 
Without loss of generality, let us suppose that 
\begin{itemize}
\item the set of input labels of $Q$ is $L = \{e_1, \ldots, e_{2n}\}$,
\item the set of the output labels of the gates of $Q$ is $\{u_1,\ldots,u_k\}$.
\end{itemize}

Let $\tau$ be a function that maps the labels of $Q$ to propositional literals such that

$
\quad\left\{
  \begin{array}{l}
    \tau(e_i)=v_i,1\leq i\leq n\\
    \tau(e_i)=\overline{v_{i-n}},n+1\leq i\leq 2n\\
    \tau(u_i)=v_{n+i}, 1\leq i\leq k-1\\
    \tau(u_k)=s\\
  \end{array}
\right.
$

For any  gate $g = \texttt{and}(\{\alpha_1, \ldots, \alpha_m\}, t) \in G$, let $\pi(g) = (\overline{\tau( \alpha_1)} \vee \cdots \vee \overline{\tau( \alpha_m)} \vee \tau(t))$.

For any  gate $g = \texttt{or}(\{\alpha_1, \ldots, \alpha_m\}, t) \in G$, let $\pi(g) = \bigwedge_{i=1}^{m}{ (\overline{ \tau( \alpha_i)} \vee \tau(t)) }$. 

Let 
\[\phi = \bigwedge_{ g \in G}{\pi(g)}.\]

Now let us show by induction on the number $k$ of gates in $Q$ that the propagator $P=\langle\phi,V,s\rangle$ computes $f$.

The property holds for $k=0$ because if the circuit $Q$ has no gate, the output label is one of the input labels $ e_i$ or $ e_{i+n}$ related to the input variable $v_i \in V$.
If the input label is $ e_i$ then the propagator $\langle \{\}, V, v_i \rangle$ computes $f$. If the input label is $ e_{i+n}$ then the propagator $\langle (v_i \vee s), V, s \rangle$, where $s$ is a new fresh variable, computes $f$.

Now, let us suppose the the property holds for any circuit with less than $k$ clauses, $k > 0$. Let $Q = \langle L, G, u \rangle$ be any $k$-gates monotone Boolean circuit which computes the Boolean counterpart $f_\mathbb{B}$ of $f$ with input variables $\{v_1,\ldots,v_n\}$. 
Let $g$ be the output gate of $Q$. Let $\alpha_1, \ldots, \alpha_m$ be the input labels of $g$. For any $1 \leq i \leq m$, let $Q_i = \langle L, G \setminus \{ g \}, \alpha_i \rangle$. By induction hypothesis, each $Q_i$ computes the Boolean counterpart $f_{\mathbb{B}i}$ of the matching function $f_i$ computed by the propagator $P_i = \langle \phi \setminus \pi(g), V, \tau( \alpha_i) \rangle$.

Let us consider two cases:

\begin{enumerate}
\item The output gate of $Q$ is $g = \texttt{and}(\{\alpha_1, \ldots, \alpha_m\}, u)$.

Because of the nature of $g$, for any $I \in \mathcal{I}_V$, $f_\mathbb{B}(I) = 1$ if and only if for any $1 \leq i \leq m$, $f_i(I_\mathbb{B}) = 1$. Because of the nature of $\pi(g)$, $f(I) = \texttt{yes}$ if and only if for any $1 \leq i \leq m$, $\tau(\alpha_i) \in \mathcal{U}((\phi \setminus \pi(g))|_I)$. Then $f(I) = \texttt{yes}$ if and only if $f_\mathbb{B} (I_\mathbb{B}) = 1$.

\item The output gate of $Q$ is $g = \texttt{or}(\{\alpha_1, \ldots, \alpha_m\}, u)$.

Because of the nature of $g$, for any $I \in \mathcal{I}_V$, $f_\mathbb{B}(I) = 1$ if and only there exists $1 \leq i \leq m$, such as $f_i(I_\mathbb{B}) = 1$. Because of the nature of $\pi(g)$, $f(I) = \texttt{yes}$ if and only if there exists $1 \leq i \leq m$, such that $\tau(\alpha_i) \in \mathcal{U}((\phi \setminus \pi(g))|_I)$. Then $f(I) = \texttt{yes}$ if and only if $f_\mathbb{B} (I_\mathbb{B}) = 1$.
\end{enumerate}
\end{proof}

\begin{example}
The circuit of the figure \ref{circuit1} can be translated into a \textsc{cnf} formula $\phi$ in the following way:
\begin{description}
\item[] - the gate $\texttt{and}(\{e_1, e_2\}, u_1)$ produces the clause $(\bar v_1 \vee \bar v_2 \vee v_3)$;
\item[] - the gate $\texttt{or}(\{u_1, e_4\}, u_2)$ produces the clauses $(\bar v_3 \vee s)$ and $(v_2 \vee s)$.
\end{description}

This circuit computes the Boolean counterpart $f_\mathbb{B}$ of the function $f$ computed by the propagator $\langle \phi, \{v_1, v_2\}, s \rangle$.
\end{example}

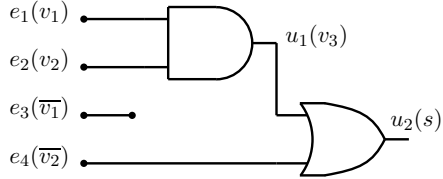
\begin{figure}[h]
\centering
\scalebox{0.8} 
{
\begin{pspicture}(0,-1.4392188)(8.502812,1.4792187)
\psdots[dotsize=0.12](2.3809376,1.1807812)
\psdots[dotsize=0.12](2.3809376,0.38078126)
\psdots[dotsize=0.12](2.3809376,-0.41921875)
\psdots[dotsize=0.12](2.3809376,-1.2192187)
\usefont{T1}{ptm}{m}{n}
\rput(1.6423438,1.2907813){$e_1 (v_1)$}
\rput{-90.0}(3.8001564,5.3617187){\psarc[linewidth=0.04](4.5809374,0.78078127){0.6}{0.0}{180.0}}
\psline[linewidth=0.04cm](4.5809374,1.3807813)(3.7809374,1.3807813)
\psline[linewidth=0.04cm](4.5809374,0.18078125)(3.7809374,0.18078125)
\psline[linewidth=0.04cm](3.7809374,1.3807813)(3.7809374,0.18078125)
\psline[linewidth=0.04cm](3.7809374,1.1807812)(3.3809376,1.1807812)
\psline[linewidth=0.04cm](3.7809374,0.38078126)(3.3809376,0.38078126)
\psline[linewidth=0.04cm](5.1809373,0.78078127)(5.5809374,0.78078127)
\usefont{T1}{ptm}{m}{n}
\rput(1.6423438,0.49078125){$e_2 (v_2)$}
\usefont{T1}{ptm}{m}{n}
\rput(1.6423438,-0.30921876){$e_3 (\overline{v_1})$}
\psline[linewidth=0.04cm](2.3809376,1.1807812)(3.3809376,1.1807812)
\psline[linewidth=0.04cm](2.3809376,0.38078126)(3.3809376,0.38078126)
\psline[linewidth=0.04cm](2.3809376,-1.2192187)(5.5809374,-1.2192187)
\psline[linewidth=0.04cm](5.5809374,0.78078127)(5.5809374,-0.41921875)
\psline[linewidth=0.04cm](2.3809376,-0.41921875)(3.1809375,-0.41921875)
\psdots[dotsize=0.12](3.1809375,-0.41921875)
\usefont{T1}{ptm}{m}{n}
\rput(7.912344,-0.50921875){$u_2 (s)$}
\usefont{T1}{ptm}{m}{n}
\rput(6.2523437,0.8907812){$u_1 (v_3)$}
\psbezier[linewidth=0.04](6.3809376,-0.21921875)(6.9649377,-0.32121876)(7.1809373,-0.41921875)(7.3809376,-0.81921875)
\psline[linewidth=0.04cm](6.3929377,-0.21921875)(5.9689374,-0.21921875)
\psbezier[linewidth=0.04](6.3809376,-1.4192188)(6.9649377,-1.3172188)(7.1809373,-1.2192187)(7.3809376,-0.81921875)
\rput{-270.0}(4.3617187,-6.0001564){\psarc[linewidth=0.04](5.1809373,-0.81921875){1.0}{-126.46923}{-53.130104}}
\psline[linewidth=0.04cm](7.3809376,-0.81921875)(7.7809377,-0.81921875)
\psline[linewidth=0.04cm](6.1009374,-0.41921875)(5.5809374,-0.41921875)
\psline[linewidth=0.04cm](6.1009374,-1.2192187)(5.5809374,-1.2192187)
\psline[linewidth=0.04cm](6.3929377,-1.4192188)(5.9689374,-1.4192188)
\usefont{T1}{ptm}{m}{n}
\rput(1.6423438,-1.1092187){$e_4 (\overline{v_2})$}
\end{pspicture} 
}
\caption{A monotone circuit computing the Boolean counterpart of a propagatable function. \label{circuit1}}
\end{figure}

\begin{theorem}\label{th2}
For any matching function $f$, if there exists a propagator $\langle \phi, V, s \rangle$ computing $f$, then there exists a \emph{monotone} circuit with $O(n^2k)$ gates computing $f_{\mathbb{B}}$, where $n$ is the number of variables and $k$ the number of clauses in $\phi$.
\end{theorem}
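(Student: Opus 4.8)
The plan is to reuse the reified formula as a ready-made, \emph{layered} scaffold for the circuit, exploiting the fact (Lemma~\ref{lemme-reif}) that unit resolution on $\sigma=\mathrm{reif}(\phi,V)$ fixes each reified variable $v_i^{+},v_i^{-}$ only at stage $i$ and only to \texttt{true}. The key observation is that every initialization, propagation, and deduction clause of $\sigma$ contains exactly one positive literal, so unit resolution on $\sigma$ is pure forward chaining through definite clauses — which is precisely what a monotone circuit evaluates — while the rank/stage indexing guarantees that the resulting dependency graph is acyclic. This turns the size estimate into a triviality, since the text has already established that $\sigma$ has $O(n^2k)$ clauses of bounded width.

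Concretely, I would introduce one internal node $g(z)$ of the circuit for each reified variable $z\in\{v_i^{+},v_i^{-}:v\in\mathrm{var}(\phi),\ 1\le i\le n+1\}$, with the intention that $g(z)=1$ iff unit resolution on $\sigma|_I$ fixes $z$. The $2|V|$ circuit inputs are the Boolean representation $x_1,\dots,x_n,y_1,\dots,y_n$ of $I$. The wiring follows the three clause families: in the base layer $g(v_1^{+})$ is \texttt{true} whenever $(v)\in\phi$, disjoined with the input $x_v$ when $v\in V$ (symmetrically for $g(v_1^{-})$ and $y_v$); for $i\ge 2$, each propagation clause $(\overline{v_{i-1}^{+}}\vee v_i^{+})$ contributes the disjunct $g(v_{i-1}^{+})$, and each deduction clause $\chi(q,i,v)$ with head $v_i^{+}$ contributes an \texttt{and} gate over the body nodes $\{g(\delta(\bar t)_{i-1}):t\in q,\ t\neq v\}$, all collected by an \texttt{or} gate into $g(v_i^{+})$ (symmetrically for $v_i^{-}$). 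The output node is $g(s_{n+1}^{+})$, connected to the output label by a \texttt{tie} gate. Because every gate is \texttt{and}/\texttt{or} and the inputs enter only positively, the circuit is monotone, and the $O(n^2k)$ clause bound gives $O(n^2k)$ gates: each clause contributes $O(1)$ gates, and there are only $2n(n+1)=O(n^2)$ head/\texttt{or} nodes.

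For correctness I would prove, by induction on $i$ mirroring the proof of Theorem~\ref{theorem-reif}, that $g(z)$ evaluates to $1$ under inputs $I_\mathbb{B}$ iff $z\in\mathcal{U}_i(\sigma|_I)$ at its rank $i$. Theorem~\ref{theorem-inject} identifies $\mathcal{U}(\sigma|_I)$ with $\mathcal{U}(\mathrm{reif}(\phi|_I))$, and Theorem~\ref{theorem-reif} gives $s_{n+1}^{+}\in\mathcal{U}(\sigma|_I)$ iff $s\in\mathcal{U}_{1..n+1}(\phi|_I)$. Since $f$ is a propagatable matching function, its defining propagator never returns $\bot$ on its domain $D$, so $\mathcal{U}_{1..n+1}(\phi|_I)=\mathcal{U}(\phi|_I)$; hence $g(s_{n+1}^{+})=1$ iff $f(I)=\texttt{yes}$, i.e. iff $f_\mathbb{B}(I_\mathbb{B})=1$.

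The step I expect to be the crux is this correctness induction, specifically justifying that evaluating a \emph{static} monotone circuit reproduces the \emph{iterative} least fixed point of unit resolution without any circularity. This rests on two structural facts: Lemma~\ref{lemme-reif}, which rules out negative fixings (so nothing is ever retracted and the monotonicity is genuine), and the stage layering of $\sigma$, which makes the node dependency graph acyclic so that a single bottom-up evaluation is well defined and agrees stage by stage with propagation. The remaining fiddly point is the input layer: translating the injection clauses into the positive inputs $x_v,y_v$ while correctly distinguishing the variables of $V$ from the auxiliary variables of $\phi$, which carry no input disjunct.
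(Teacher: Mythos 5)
Your proposal is correct and follows essentially the same route as the paper's proof: both use the reified formula $\mathrm{reif}(\phi,V)$ as a layered, acyclic scaffold, introduce one circuit node per reified variable $v_i^{+},v_i^{-}$ with the invariant that the node is $1$ iff unit resolution fixes that variable, translate deduction clauses into \texttt{and} gates and collect alternatives (propagation/deduction) with \texttt{or} gates, take $\diamond s_{n+1}^{+}$ as output, and derive the $O(n^2k)$ bound from the size of the reified formula. The only difference is cosmetic: the paper additionally maintains sets $U_{\mathbf{0}}$ and $U_{\mathbf{1}}$ of constant nodes to simplify the layers as it goes, an optimization that affects neither correctness nor the asymptotic gate count.
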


\begin{proof}

Let $\langle \phi, V, s \rangle$ be a propagator computing a matching function $f$. Clearly, the propagator $P = \langle \psi = \mathrm{ reif}(\phi, V), V, s_{ n+1}^+ \rangle$ computes $f$ too. According to Lemma \ref{lemme-reif} and Theorem \ref{theorem-reif},
$\psi$ can be decomposed as $\psi_0 \wedge \psi_1 \wedge \cdots \wedge \psi_{n+1}$ such that 
\begin{itemize}
\item $\psi_0$ contains the initialization clauses of rank 0 of the reified counterpart of $\phi$, 

\item $\psi_1$ contains the initialization clauses of rank 1 as well as the injection clauses,

\item for any $2 \leq i \leq i$, $\psi_i$ contains both the propagation clauses and the deduction clauses of rank $i$.
\end{itemize}

The corresponding circuit $Q$ will contain the following nodes:

\begin{itemize}
\item two \emph{input nodes} $\diamond t$ and $ \diamond \bar t$ related to each input variable $t \in V$, with the convention that $\diamond t = 1$ if and only if $t$ is assigned to \texttt{true}, and $\diamond \bar t = 1$ if and only if $t$ is assigned to \texttt{false};

\item one \emph{major node} $\diamond v$ related to any variable $v \in \mathrm{var} (\psi)$ that can be assigned to \texttt{true} by unit resolution, with the convention that $\diamond v = 1$ if and only if unit resolution fixes $v$ to \texttt{true};

\item some \emph{additional nodes}, if applicable;
\end{itemize}

Major nodes and additional nodes can be constant, i.e. permanently assigned either to 0 or 1. The constant nodes are not explicitly represented in the circuit but are referenced in the sets $U_\mathbf{0}$ and $U_\mathbf{1}$, respectively.

The circuit $Q$ consists of several layers $Q_1, \ldots, Q_{n+1}$, where each $Q_i$ simulates the stage $i$ of unit resolution on $\psi|_I$ for any $I \in \mathcal{I}_V$. 

At the first step of the construction, $U_\mathbf{0}$ is initialized with the nodes $\diamond v_0^+$ ($\diamond v_0^-$, respectively) for any variable $v \in \mathrm{ var}( \phi)$ such that $(v_0^+)$ ($(v_0^-)$, respectively) does not occur in $\psi_0$, and $U_\mathbf{1}$ is initialized with the nodes $\diamond v_0^+$ ($\diamond v_0^-$, respectively) for any variable $v_0^+$ ($v_0^-$, respectively) such that the clause $(v_0^+)$ ($(v_0^-)$, respectively) occurs in $\psi_0$.

Each of the next steps builds $Q_i$ in such a way that it simulates the effect of unit resolution applied to $\psi_i$. This is done as follows:

For each variable $v$ of $\phi$ and for each variable $u \in \{v_i^+, v_i^-\}$, let $C$ be the set of clauses of $\psi_i$ containing $u$, simplified by removing the clauses containing a literal $\bar w$ such that $\diamond w \in U_\mathbf{ 0}$ and removing any literal $\bar w$ such that $\diamond w \in U_\mathbf{1}$ from the other clauses.

If the set $C$ is empty, which means that unit resolution cannot fix $u$, then $\diamond u$ is added to $U_\mathbf{0}$. If $C$ contains a clause $(u)$, which means that unit resolution will always fix $u$ to \texttt{true}, then $\diamond u$ is added to $U_\mathbf{1}$. If $C$ contains only one clause $(\bar w \vee u)$, meaning that $u$ is fixed to \texttt{true} if and only if $w$ is previously fixed to \texttt{true}, the connection $\mathtt{ tie}( \diamond w, \diamond u)$ is produced. If $C$ contains only one clause with more than two literals like $(\overline w_1 \vee \cdots \vee \overline w_k \vee u)$, meaning that $u$ is fixed to \texttt{true} if and only if $w_1$ and ... and $w_k$ are previously fixed to \texttt{true}, the gate $\mathtt{ and}(\{\diamond w_1, \cdots, \diamond w_k\},\diamond u)$ is produced. 

In the other cases, i.e., when there are several clauses which can allow unit resolution to fix $u$,  an additional node $\alpha_c$ is created for each clause $c \in C$. For any binary clause $(\bar w \vee u) \in C$, the gate $\mathtt{ tie}( \diamond w, \alpha_c)$ is produced. For any other clause $(\overline w_1 \vee \cdots \vee \overline w_k \vee u)$, the gate  $\mathtt{ and}(\{\diamond w_1, \cdots, \diamond w_k\}, \alpha_c)$ is produced. Then the gate $\mathtt{or}(\{\alpha_c, c \in C\}, \diamond u)$ is produced, in such a way that $\mathrm{val}(\diamond u) = 1$ if and only if unit resolution fixes $u$ to \texttt{true}.

Because each sub-circuit $Q_i$ simulates exactly the effect of unit resolution on the corresponding formula $\psi_i$, the value of the output node $\diamond s_{n+1}^+$ will reflect the value of the output variable $s_{n+1}^+$ after all propagation stages on $\psi$ have been made.

The number of gates in the circuit is linearly related to the number of clauses in the reified counterpart of $\phi$, which is $O(n^2k)$. 
\end{proof}

\begin{example}\label{example-prop2circ}
Let us consider the propagator $\left\langle (a \vee \overline b \vee c), \{a, b\}, c \right\rangle$. At the first stage of the construction, $U_\mathbf{0} = \{ \diamond c_0^+, \diamond c_0^- \}$, and $U_\mathbf{1} = \{  \}$ because $\psi_0$ is empty. The input nodes of the circuit are $\diamond a, \diamond \overline a, \diamond b$, and $\diamond \overline b$.

The first layer of the circuit is based on:

\[ \psi_1 = \overbrace{(\overline a \vee a_1^+) \wedge (a \vee a_1^-) \wedge (\overline b \vee b_1^+) \wedge (b \vee b_1^-)}^{\mathrm{injection~clauses}}\]

It consists in the connections $\mathtt{tie}(\diamond a, \diamond a_1^+)$, $\mathtt{tie}(\diamond \overline a, \diamond a_1^-)$, $\mathtt{tie}(\diamond b, \diamond b_1^+)$, $\mathtt{tie}(\diamond \overline b, \diamond b_1^-)$. The variables $\diamond c_1^+$ and $\diamond c_1^-$ are added to $U_\mathbf{0}$.

The second layer is based on:

\[
\begin{array}{c}
\psi_2 = \overbrace{(\overline{a_1^+} \vee a_2^+) \wedge (\overline{a_1^-} \vee a_2^-) \wedge (\overline{b_1^+} \vee b_2^+) \wedge (\overline{b_1^-} \vee b_2^-) \wedge (\overline{c_1^+} \vee c_2^+) \wedge (\overline{c_1^-} \vee c_2^-)}^{\mathrm{propagation~clauses}} \wedge \\
\overbrace{(\overline{a_1^-} \vee \overline{b_1^+}  \vee c_2^+) \wedge (\overline{a_1^-} \vee \overline{c_1^-}  \vee b_2^-) \wedge (\overline{b_1^+} \vee \overline{c_1^-}  \vee a_2^+)}^{\mathrm{deduction~clauses}} \\
\end{array}
\]
The two last propagation clauses and the two last deduction clauses are ignored because $\diamond c_1^+$ and $\diamond c_1^-$ are in $U_\mathbf{ 0}$. The four first propagation clauses are translated into $\mathtt{tie}(\diamond a_1^+, \diamond a_2^+)$, $\mathtt{tie}(\diamond a_1^-, \diamond a_2^-)$, $\mathtt{tie}(\diamond b_1^+, \diamond b_2^+)$, $\mathtt{tie}(\diamond b_1^-, \diamond b_2^-)$.
The first deduction clause is translated into the gate $\mathtt{and}(\{a_1^-, b_1^+\},c_2^+)$. $c_2^-$ is added to $U_\mathbf{0}$.

The third layer is based on:

\[
\begin{array}{c}
\psi_3 = \overbrace{(\overline{a_2^+} \vee a_3^+) \wedge (\overline{a_2^-} \vee a_3^-) \wedge (\overline{b_2^+} \vee b_3^+) \wedge (\overline{b_2^-} \vee b_3^-) \wedge (\overline{c_2^+} \vee c_3^+) \wedge (\overline{c_2^-} \vee c_3^-)}^{\mathrm{propagation~clauses}} \wedge \\
\overbrace{(\overline{a_2^-} \vee \overline{b_2^+}  \vee c_3^+) \wedge (\overline{ a_2^-} \vee \overline{c_2^-}  \vee b_3^-) \wedge (\overline{b_2^+} \vee \overline{ c_2^-}  \vee a_3^+)}^{\mathrm{deduction~clauses}} \\
\end{array}
\]

The propagation clause $(\overline{c_2^+} \vee c_3^+)$ is translated into the connection $\mathtt{tie}(\diamond c_2^+, \alpha_1)$, the deduction clause $(\overline{a_2^-} \vee \overline{b_2^+}  \vee c_3^+)$ is translated into the gate $\mathtt{and}(\{a_2^-, b_2^+\}, \alpha_2)$, and the clause $\mathtt{ or}(\{ \alpha_1, \alpha_2 \}, \diamond c_3^+)$ is added, in such a way that $\diamond c_3^+$ is set to 1 either if $\diamond c_2^+$ is set to 1 or if both $\diamond a_2^-$ and $\diamond b_2^+$ are set to 1, i.e., if at stage 2 of unit resolution, either $c$ is fixed to \texttt{true} or $a$ and $b$ are fixed to \texttt{false} and \texttt{true}, respectively...

A part of the corresponding circuit is given Figure \ref{circuit2}. (Recall that this circuit is obtained from a reified formula, which, as mentioned above, presents some redundancies.)
\end{example}

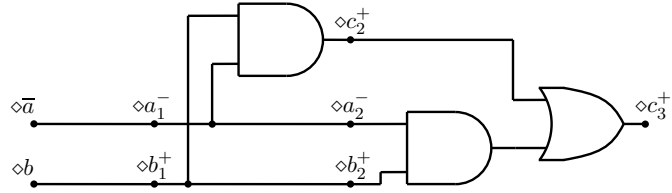
\begin{figure}[h]
\centering
\scalebox{0.8} 
{
\begin{pspicture}(0,-1.5507812)(13.482813,1.5507812)
\usefont{T1}{ptm}{m}{n}
\rput(1.6123438,-0.15921874){$\diamond \overline a$}
\psbezier[linewidth=0.04](10.599531,0.1307812)(11.183532,0.02878119)(11.399531,-0.0692188)(11.599531,-0.4692188)
\psline[linewidth=0.04cm](10.611531,0.1307812)(10.187531,0.1307812)
\psbezier[linewidth=0.04](10.599531,-1.0692189)(11.183532,-0.9672189)(11.399531,-0.86921877)(11.599531,-0.4692188)
\rput{-270.0}(8.930312,-9.868751){\psarc[linewidth=0.04](9.399531,-0.46921915){1.0}{-126.46923}{-53.130104}}
\psline[linewidth=0.04cm](11.599531,-0.4692188)(11.999532,-0.4692188)
\psline[linewidth=0.04cm](10.319531,-0.0692188)(9.76,-0.07078125)
\psline[linewidth=0.04cm](10.319531,-0.86921877)(9.799531,-0.86921877)
\psline[linewidth=0.04cm](10.611531,-1.0692189)(10.187531,-1.0692189)
\psline[linewidth=0.04cm](1.7995313,-0.46921885)(3.76,-0.47078124)
\psdots[dotsize=0.12](1.7995312,-0.4692188)
\usefont{T1}{ptm}{m}{n}
\rput(1.6123438,-1.19){$\diamond b$}
\psline[linewidth=0.04cm](1.7995313,-1.4692189)(3.76,-1.4707812)
\psdots[dotsize=0.12](1.7995312,-1.4692189)
\usefont{T1}{ptm}{m}{n}
\rput(3.7623436,-0.15921874){$\diamond a_1^-$}
\usefont{T1}{ptm}{m}{n}
\rput(3.8123438,-1.1592188){$\diamond b_1^+$}
\psdots[dotsize=0.12](3.7995312,-1.4692189)
\psdots[dotsize=0.12](3.7995312,-0.4692188)
\psline[linewidth=0.04cm](3.7995312,-0.46921885)(7.06,-0.47078124)
\psline[linewidth=0.04cm](3.7995312,-1.4692189)(7.06,-1.4707812)
\usefont{T1}{ptm}{m}{n}
\rput(7.0623436,-0.15921874){$\diamond a_2^-$}
\usefont{T1}{ptm}{m}{n}
\rput(7.112344,-1.1592188){$\diamond b_2^+$}
\rput{-90.0}(9.66875,7.9303126){\psarc[linewidth=0.04](8.799531,-0.86921877){0.6}{0.0}{180.0}}
\psline[linewidth=0.04cm](8.799531,-0.26921874)(7.9995313,-0.26921874)
\psline[linewidth=0.04cm](8.799531,-1.4692189)(7.9995313,-1.4692189)
\psline[linewidth=0.04cm](7.9995313,-0.26921874)(7.9995313,-1.4692189)
\psline[linewidth=0.04cm](7.9995313,-0.46921885)(7.06,-0.47078124)
\psline[linewidth=0.04cm](7.9995313,-1.2692188)(7.56,-1.2707813)
\psline[linewidth=0.04cm](9.399531,-0.86921877)(9.799531,-0.86921877)
\rput{-90.0}(5.06875,6.9303126){\psarc[linewidth=0.04](5.9995313,0.93078125){0.6}{0.0}{180.0}}
\psline[linewidth=0.04cm](5.9995313,1.5307813)(5.199531,1.5307813)
\psline[linewidth=0.04cm](5.9995313,0.3307812)(5.199531,0.3307812)
\psline[linewidth=0.04cm](5.199531,1.5307813)(5.199531,0.3307812)
\psline[linewidth=0.04cm](5.199531,1.3307811)(4.36,1.3292187)
\psline[linewidth=0.04cm](5.199531,0.5307812)(4.76,0.52921873)
\psline[linewidth=0.04cm](6.599531,0.93078125)(7.06,0.92921877)
\psline[linewidth=0.04cm](4.76,0.52921873)(4.76,-0.47078124)
\psline[linewidth=0.04cm](4.36,1.3292187)(4.36,-1.4707812)
\psdots[dotsize=0.12](4.36,-1.4707812)
\psdots[dotsize=0.12](4.76,-0.47078124)
\psdots[dotsize=0.12](7.06,-0.47078124)
\psdots[dotsize=0.12](7.06,-1.4707812)
\psdots[dotsize=0.12](11.96,-0.47078124)
\psdots[dotsize=0.12](7.06,0.92921877)
\usefont{T1}{ptm}{m}{n}
\rput(7.1023436,1.2407813){$\diamond c_2^+$}
\psline[linewidth=0.04cm](7.06,-1.4707812)(7.56,-1.4707812)
\psline[linewidth=0.04cm](7.56,-1.4707812)(7.56,-1.2707813)
\psline[linewidth=0.04cm](7.06,0.92921877)(9.76,0.92921877)
\psline[linewidth=0.04cm](9.76,0.92921877)(9.76,-0.07078125)
\usefont{T1}{ptm}{m}{n}
\rput(12.102344,-0.15921874){$\diamond c_3^+$}
\end{pspicture} 
}
\caption{A part of the circuit related to the example \ref{example-prop2circ}. \label{circuit2}}
\end{figure}

\begin{theorem}
Let $f$ be any matching function with domain $D$. $f$ is propagatable if and only if it is monotone.
\end{theorem}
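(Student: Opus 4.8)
The plan is to prove the two implications separately, relying crucially on Theorems~\ref{th1} and~\ref{th2}, which already translate between propagators and monotone circuits, together with the observation that computability by a monotone circuit is equivalent to monotonicity of the Boolean function. First I would establish the direction ``$f$ propagatable $\Rightarrow$ $f$ monotone''. Suppose $P=\langle\phi,V,s\rangle$ computes $f$. The key structural fact is that unit resolution is itself monotone with respect to adding literals: if $I\subseteq J$ are both in $D$ and $\mathcal{U}(\phi|_I)\neq\bot$ produces $s$, then since $\phi|_I$ is a sub-formula of $\phi|_J$ (it has fewer unit clauses from the assignment), every literal fixed under $I$ is still fixed under $J$, so $s\in\mathcal{U}(\phi|_J)$ as well. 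This is exactly the argument used informally in the second example of Section~\ref{computing-ur} to show that $g$ is not propagatable. Hence $I\subseteq J$ implies $f(I)\leq_M f(J)$, i.e. $f$ is monotone. (One must be careful that $\mathcal{U}(\phi|_J)\neq\bot$ is guaranteed by $J\in D$, which is part of the definition of a propagatable matching function.)

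For the converse, ``$f$ monotone $\Rightarrow$ $f$ propagatable'', the plan is to show that monotonicity of $f$ in the sense of Definition~(monotone matching function) translates into a property of $f_\mathbb{B}$ that lets a \emph{monotone} Boolean circuit compute it, after which Theorem~\ref{th1} delivers a propagator. The subtlety here is that the order on partial assignments $I\subseteq J$ does \emph{not} directly correspond to the coordinatewise order on the Boolean representations $I_\mathbb{B}=(x_1,\ldots,x_n,y_1,\ldots,y_n)$ in a naive way: adding a literal $v_i$ to $I$ flips $x_i$ from $0$ to $1$ while leaving $y_i$ at $0$, which \emph{is} a coordinatewise increase. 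So I would verify that $I\subseteq J$ holds if and only if $I_\mathbb{B}\leq J_\mathbb{B}$ coordinatewise on the restricted domain $D_\mathbb{B}$, and therefore that monotonicity of $f$ is equivalent to $f_\mathbb{B}$ being a monotone Boolean function on $D_\mathbb{B}$. A monotone Boolean function (possibly after extending it monotonically to all of $\{0,1\}^{2n}$) can be realized by a monotone circuit, so Theorem~\ref{th1} then yields a propagator computing $f$.

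The main obstacle I anticipate is the domain issue in the converse direction. The function $f_\mathbb{B}$ is only defined on $D_\mathbb{B}\subseteq\{0,1\}^{2n}$, and a propagator computes $f$ only on inputs in $D$; outside $D$ its behaviour is unconstrained. So to apply Theorem~\ref{th1} I need a \emph{total} monotone function on $\{0,1\}^{2n}$ agreeing with $f_\mathbb{B}$ on $D_\mathbb{B}$. The natural candidate is the monotone extension $\hat f(z)=\bigvee\{f_\mathbb{B}(w): w\in D_\mathbb{B},\, w\leq z\}$ (the largest-lower-set extension); I would check that $\hat f$ is monotone by construction and that $\hat f(I_\mathbb{B})=f_\mathbb{B}(I_\mathbb{B})$ for $I_\mathbb{B}\in D_\mathbb{B}$, using precisely the monotonicity of $f_\mathbb{B}$ on $D_\mathbb{B}$ to rule out a strictly smaller element of $D_\mathbb{B}$ forcing the value up spuriously. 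One further point of care is the implicit constraint encoded by consistency of partial assignments: in $D_\mathbb{B}$ one never has $x_i=y_i=1$ simultaneously, so the extension is only evaluated on genuine inputs, and the circuit obtained from Theorem~\ref{th1} need only behave correctly on $D$. Once these domain matters are handled cleanly, both implications close and the theorem follows.
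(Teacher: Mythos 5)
Your proof is correct, but it reaches the theorem by a partially different route than the paper, and the difference is worth noting. The paper derives \emph{both} directions from the circuit translations: for ``propagatable $\Rightarrow$ monotone'' it invokes Theorem~\ref{th2} (propagator $\to$ monotone circuit, hence $f_{\mathbb{B}}$ monotone), and for ``monotone $\Rightarrow$ propagatable'' it asserts that a monotone $f_{\mathbb{B}}$ can be computed by a monotone circuit and then applies Theorem~\ref{th1}. Your forward direction avoids Theorem~\ref{th2} entirely: you argue directly from the monotonicity of unit propagation, namely that $\phi|_I \subseteq \phi|_J$ forces $\mathcal{U}(\phi|_I) \subseteq \mathcal{U}(\phi|_J)$ whenever $\mathcal{U}(\phi|_J) \neq \bot$ (and $J \in D$ guarantees this, by the definition of a propagatable matching function). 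This is exactly the observation the paper makes only informally, for the non-propagatable example $g$; promoting it to the proof of this theorem gives a more elementary and self-contained argument, independent of the fairly involved reified-formula-to-circuit construction behind Theorem~\ref{th2}. Your backward direction follows the paper's route but is more careful on a point the paper glosses over: since $f_{\mathbb{B}}$ is a \emph{partial} function on $D_{\mathbb{B}} \subseteq \{0,1\}^{2n}$, the claim that it is computable by a monotone circuit needs a monotone total extension, and your lower-set extension $\hat f(z)=\bigvee\{f_{\mathbb{B}}(w): w\in D_{\mathbb{B}},\, w\leq z\}$, together with the check that monotonicity of $f_{\mathbb{B}}$ on $D_{\mathbb{B}}$ prevents the extension from disagreeing with $f_{\mathbb{B}}$ there, supplies precisely the missing step (the paper's definition of ``computes'' only requires agreement on the domain, so this extension is all that is needed before applying Theorem~\ref{th1}). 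Both arguments rely on the same bridge, which you and the paper both verify: $I\subseteq J$ if and only if $I_{\mathbb{B}} \leq_B J_{\mathbb{B}}$, so $f$ is monotone exactly when $f_{\mathbb{B}}$ is. In short: your proof is valid, its forward half is genuinely different and more elementary, and its backward half patches a small gap in the paper's own proof.
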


\begin{proof}

Recall that any Boolean function $h$ with domain $D_h$ is said to be monotone if for any $z,t \in D_h$, if $z \leq_B t$ then $h(z) \leq_B h(t)$, where the ordering relation $\leq_B$ is defined as follows: $0 \leq_B 1, 0 \leq_B 0, 1 \leq_B 1$, $(z_1, \ldots z_n) \leq_B (t_1, \ldots t_n)$ if and only if $z_i \leq_B t_i, 1 \leq i \leq n$.

Now, given any matching function $f$ with domain $D$, because for any $I, J \in D$, $I \subseteq J$ if and only if $I_\mathbb{B} \leq_B J_\mathbb{ B}$, $f_\mathbb{B}$ is monotone on $D_\mathbb{B}$ if and only if $f$ is monotone on $D$.

Let $f$ be any monotone matching function with domain $D$. Because $f_\mathbb{B}$ is monotone, it can be computed by a monotone circuit. It follows from Theorem \ref{th1} that $f$ is propagatable.

Now let us consider any propagatable matching function $f$ with domain $D$. It follows from Theorem \ref{th2} that $f_\mathbb{B}$ can be computed by a monotone circuit, which implies that $f_\mathbb{B}$ is monotone. Then $f$ is monotone.
\end{proof}

\begin{theorem}\label{th_compl}
Any family $\mathcal{F}$ of propagatable functions is propagatable in polynomial space if and only if the family of the Boolean counterparts of  $\mathcal{F}$ has polynomial space monotone circuit complexity, i.e., these functions can be calculated by monotone circuits with a polynomial number of gates.
\end{theorem}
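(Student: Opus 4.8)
The plan is to read Theorem~\ref{th_compl} as the quantitative refinement of the equivalence established just above (a matching function is propagatable exactly when it is monotone): the two implications are obtained by composing the constructions of Theorems~\ref{th1} and~\ref{th2} while checking that every size parameter stays polynomially bounded. I would fix throughout an arbitrary $f \in \mathcal{F}$ with domain $D \subseteq \mathcal{I}_{\{v_1,\ldots,v_n\}}$; by the preceding theorem its Boolean counterpart $f_\mathbb{B}$ is monotone, so a monotone circuit computing it exists, and the only thing at stake is the relationship between its size and the size of a computing propagator.

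First I would treat the direction ``polynomially propagatable $\Rightarrow$ polynomial monotone circuit complexity''. By hypothesis $f$ is computed by a propagator $\langle\phi,V,s\rangle$ whose formula $\phi$ has size (number of literal occurrences) bounded by some polynomial in $n$. The number of variables $N$ and the number of clauses $K$ of $\phi$ are each at most this size, hence both are polynomial in $n$. Theorem~\ref{th2} then supplies a monotone circuit with $O(N^2K)$ gates computing $f_\mathbb{B}$, and $O(N^2K)$ is polynomial in $n$. Since the blow-up $N^2K$ depends only on the parameters of $\phi$ and not otherwise on $f$, a single family-wide polynomial bound is preserved across all of $\mathcal{F}$.

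For the converse I would start from a monotone circuit with $m$ gates computing $f_\mathbb{B}$, where $m$ is polynomial in $n$. To invoke Theorem~\ref{th1} I must also control the maximum fan-in $k$: since the inputs of each gate are distinct nodes, $k$ is at most the total number of nodes $m + 2n$, which is again polynomial in $n$. Theorem~\ref{th1} then produces a propagator with $O(mk)$ clauses computing $f$; each such clause has at most $k+1$ literals and the total number of literal occurrences is $O(mk)$ as well, so the formula has polynomial size and $f$ is polynomially propagatable, uniformly over $\mathcal{F}$.

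The routine part is the two applications of Theorems~\ref{th1} and~\ref{th2}; the only point requiring care---the main obstacle, such as it is---is reconciling the several distinct size measures at play: literal occurrences versus number of variables versus number of clauses for a propagator, and number of gates versus fan-in versus total wiring for a circuit. In particular, the converse direction does not follow from the gate count alone: one must observe that a polynomial bound on the number of gates also yields a polynomial bound on each gate's arity (hence on $k$), which is exactly what lets $O(mk)$ stay polynomial.
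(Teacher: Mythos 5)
Your proof is correct and follows essentially the same route as the paper: both directions are obtained by composing the constructions of Theorems~\ref{th1} and~\ref{th2} and verifying that every size parameter (clauses, literal occurrences, gates) stays polynomially bounded. Your explicit bounding of the fan-in $k$ by the total node count is a bookkeeping detail the paper leaves implicit, but it does not constitute a different approach.
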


\begin{proof}
The proof of theorem \ref{th1} shows how to create a propagator from a monotone circuit. Each \texttt{and} gate with $n$ inputs is translated into one $n$-ary clause, and each \texttt{or} gate with $n$ inputs is translated into $n$ binary clauses.

The proof of theorem \ref{th2} shows how to create a monotone circuit from a propagator $P =\left\langle \phi, V, s \right\rangle $. This circuit is based on the reified counterpart $\psi$ of the formula $\phi$. Each clause of $\psi$ with $n$ literals is involved in at most $n$ \texttt{and} gates, and each literal is involved in at most one \texttt{or} gate.

\end{proof}

\section{Synthesis and perspectives \label{synthesis}}

Altogether, the results given in this paper provide important information about the expressive power of unit resolution. In particular, we can show that there exist polynomial time complexity propagatable functions that admit only propagators with an exponential number of clauses.

As an example, let us consider the Boolean functions $\mathrm{pm}^{(n)}$, like \emph{perfect matching}, such that for any $n$-bits Boolean encoding $g$ of a graph $G$, $\mathrm{pm}^{(n)}(g)=1$ if and only if there exists a perfect matching for $G$, that is a set of edges that covers each vertex exactly once. 

Next, let us consider the variants $\mathrm{vpm}^{(n)}$ such that 
\begin{itemize}
  \item the domain $D_{vpm^{(n)}}$ of $\mathrm{vpm}^{(n)}$ is the set $\{(x_1, \ldots, x_n, 0, \ldots, 0), (x_1, \ldots, x_n) \in D_{pm^{(n)}}\}$, where $D_{pm^{(n)}}$ is the domain of  $\mathrm{pm}^{(n)}$,
    \item for any $b=(x_1,\ldots,x_n,0,\ldots,0) \in D_{vpm^{(n)}}$, $\mathrm{vpm}^{(n)}(b)=1$ if and only if $\mathrm{pm}^{(n)}(x_1,\ldots,x_n)=1$.
\end{itemize}

Now, let $\mathrm{fpm}^{(n)}$ denote the matching functions related to $\mathrm{vpm}^{(n)}$. 
It is known that $\mathrm{pm}^{(n)}$, then $\mathrm{vpm}^{(n)}$, have polynomial time computational complexity but exponential monotone circuit complexity \cite{monotone-complexity-matching}. It follows from theorem \ref{th2} that $\mathrm{fpm}^{(n)}$ are filtering functions requiring an exponential number of clauses to be computed using unit resolution.

\emph{This means that although unit resolution has the same expression power as Boolean circuits regarding Boolean functions, it has a lower expression power, namely the expression power of monotone circuits, regarding \emph{filtering} functions.}

This is both very interesting and annoying, because in \textsc{sat} solvers unit propagation operates on filtering functions rather than Boolean functions. Maybe this potential weakness of unit resolution can be compensated for by other speed-up technologies. As a research perspective, this has to be verified. Meanwhile, in the field of encoding constraints into \textsc{cnf}, it would be very relevant to determine which  problems can be solved as efficiently using a \emph{simple} \textsc{sat} solver, under \textsc{cnf} encoding, as using a dedicated constraint solver maintaining generalized arc consistency. This supposes knowing the complexity of the related filtering functions regarding unit resolution.

At least two research directions follow from the ideas presented in this paper. The first one is  to characterize the expression power of some other speed-up techniques used in modern \textsc{sat} solvers, like clause learning. The second one consists in the research of deduction techniques that can polynomially compute any \textsc{cnf} encoded polynomial time complexity filtering functions.


\bibliography{biblio-propagation}
\bibliographystyle{plain}

\end{document}